\newtheorem{theorem}{Theorem}[section]
\newtheorem{lemma}[theorem]{Lemma}
\theoremstyle{definition}
\newtheorem{definition}[theorem]{Definition}
\theoremstyle{remark}
\theoremstyle{remark}
\theoremstyle{remark}
\theoremstyle{remark}
\theoremstyle{remark}
\theoremstyle{remark}
\theoremstyle{remark}
\theoremstyle{remark}
\theoremstyle{definition}
\theoremstyle{definition}
\newtheorem{notation}{Notation}[section]
\theoremstyle{definition}
\theoremstyle{definition}
\title{\vspace{-1.0em}Positional Encoding via Token-Aware Phase Attention}
\author{
Yu (Sid) Wang\thanks{Work done at Meta.}\, \thanks{Correspondence author: \texttt{yuwang2020@gmail.com}}
\quad
Sheng Shen\footnotemark[1]
\quad
R\'emi Munos\thanks{Meta.}
\quad
Hongyuan Zhan\footnotemark[3]
\quad
Yuandong Tian\footnotemark[1]
}
\date{}
\begin{document}

\maketitle

\begin{abstract}
 We prove under practical assumptions that Rotary Positional Embedding (RoPE) introduces an intrinsic distance-dependent bias in attention scores that limits RoPE's ability to model long-context. RoPE extension methods may alleviate this issue, but they typically require post-hoc adjustments after pretraining, such as rescaling or hyperparameters retuning. This paper introduces Token-Aware Phase Attention (TAPA), a new positional encoding method that incorporates a learnable phase function into the attention mechanism. TAPA preserves token interactions over long range, extends to longer contexts with direct and light continual pretraining, extrapolates to unseen lengths, and attains substantially lower perplexity and stronger retrieval performance in the long-context regime than RoPE-style baselines.
\end{abstract}

\section{Introduction}
\label{intro}

Rotary Positional Embedding (RoPE) \citep{Su2021RoFormerET} is a widely adopted positional encoding method in transformers \citep{Vaswani2017AttentionIA} that applies complex rotations to token representations.

However, RoPE, as originally designed, is not able to extrapolate to context lengths that were not seen during pretraining \citep{Sun2022ALT}, even with extensive continual-pretraining at the extended lengths \citep{Chen2023ExtendingCW,Xiong2023EffectiveLS}. Various extension methods are proposed to improve RoPE's ability to adapt to longer context, such as increasing RoPE base frequency \citep{Xiong2023EffectiveLS}, Position-Interpolation \citep{Chen2023ExtendingCW,Peng2023NTKRoPE,Ding2024LongRoPEEL}, YaRN \citep{peng2023yarnefficientcontextwindow} etc.

Many popular publicly available open-source large language models (LLMs) adopt RoPE as their default positional encoding strategy and apply certain RoPE extension methods after pretraining, including LLaMA \citep{touvron2023llama,Dubey2024LLaMA3}, DeepSeek \citep{DeepSeekV3_2024}, Qwen \citep{Bai2024Qwen2}, Mistral \citep{Jiang2023Mistral7}, Phi \citep{Abdin2024Phi4TR,Abdin2024Phi3TR}, Kimi \citep{Team2025KimiKS}, PaLM \citep{Chowdhery2022PaLMSL} etc. 

Despite RoPE’s widespread use in modern LLMs, the reasons behind RoPE's limitations and extensions remain poorly understood. This theory gap motivates our study. In this paper, we prove that RoPE attention carries a non-trivial distance bias—that is, the attention magnitude is dominated by distance between token positions rather than content. This is apparently undesired in language modeling, because relevant information may be downplayed just because it's in a ``bad'' position. In addition, our proof shows that certain RoPE extension methods such as reducing base-frequency and PI indeed mitigate this bias issue.

While RoPE extensions help to some extent, they remain tied to the rotary structure, and rely on manual interventions after pretraining, such as applying an ad hoc formula to rescale input positions, or adjusting base frequency through extensive empirical tuning. The need for such unnatural post-hoc modifications suggests that a more fundamental limitation is present in RoPE’s design, because an ideal positional encoding scheme should be able to fit longer context with minimal long context training and, more importantly, without either hyperparameter or input changes.

We introduce Token-Aware Phase Attention (TAPA), a simple positional encoding framework that inserts a learnable phase function into the attention mechanism. TAPA provably suppresses token-agnostic intrinsic distance bias under mild regularity assumptions and preserves non-degenerate attention variance at arbitrarily long distances. Importantly, it extends a pretrained model to longer contexts via a direct continual-pretraining, without input tweaks or hyperparameter retuning.

Our guiding principle is that positional effects should arise \textbf{only through contextual interactions}. Any distance dependency that persists after averaging over token content is an \emph{intrinsic distance bias} (Definition \ref{IDB}). Such token-agnostic effects can destabilize long-range modeling (Theorem \ref{pre_main_thm} and \ref{main_thm}). TAPA is designed to suppress this undesired bias (Theorem \ref{vanishing_bias_thm}) while preserving positional effects in arbitrarily long range (Theorem \ref{lower_var_thm}).


Empirically, we pretrain a 7B transformer model at 8k, continual-pretrain at 32k, and evaluate up to 64k (Table~\ref{key_results}). TAPA matches baselines through 16k. At 32k, TAPA reaches {11.74} perplexity, reducing perplexity by {$\sim$9.4\%} vs.\ RoPE/PI and {$\sim$3.5\%} vs.\ YaRN. At 64k, TAPA remains {$\sim$11.75} while others blow up to {$\sim$2$\times10^3$}, making it the only method whose test perplexity continues decreasing up to 49K
and remains non-collapsing at 64K.

\section{Theoretical Estimates for Rotary Positional Embedding (RoPE)}

\subsection{Background}
We recall the details of RoPE \citep{Su2021RoFormerET}, and introduce the notations that are important for our future analysis. 
\begin{notation}[RoPE]
    We let $D$ be transformer head dimension, $1/\theta_0$ be RoPE base frequency, and $\theta_d = \theta_0^{2d/D}$ be the rotation argument of the $d$-th dimension \footnote{Some literature adopt the notation ``$b$'' for base frequency $1/\theta_0$ and refer to $\theta_d$ as the wavelength of RoPE's $d$-th dimension. To simplify notations, we adopt ``$\theta_0$'' and avoid ``$b$''. Increasing RoPE base frequency is simply equivalent to decreasing $\theta_0$.}.
\end{notation}
Denote by $q^{(m)}$ and $k^{(n)}$ the query and key vector representations for tokens at position $m$ and $n$. When no ambiguity is present, we shall drop the upper indices $m$ and $n$ to simplify notations. Denote by $q_{[2d:2d+1]}, k_{[2d:2d+1]}\in \mathbb{R}^2$ the 2-dimensional real vectors that consist of the $(2d)^{\mathrm{th}}$ and $(2d + 1)^{\mathrm{th}}$ coordinates of $q$ and $k$ (for $0\le d\le D/2 - 1$). Further, we complexify both vectors into $q_{[2d:2d+1]}^{\mathbb{C}}$ and $k^{\mathbb{C}}_{[2d:2d+1]}$; that is,
\begin{eqnarray}
\begin{split}
&q_{[2d:2d+1]}^{\mathbb{C}} = q_{2d} + i\cdot q_{2d+1},\\
&k_{[2d:2d+1]}^{\mathbb{C}} = k_{2d} + i\cdot k_{2d+1}.
\end{split}
\end{eqnarray}
The RoPE attention score $\mathrm{Attn}_{\mathrm{RoPE}}(q, k)$ between $q$ at position $m$ and $k$ at position $n$ is defined by
\begin{eqnarray}
\begin{split}
\label{attn_def}
\frac{1}{\sqrt{D}}\text{Re}\big{[}\sum_{d=0}^{D/2 - 1} q_{[2d:2d+1]}^{\mathbb{C}}\cdot(k^{\mathbb{C}}_{[2d:2d+1]})^*\cdot e^{ i(m-n)\theta_d}\big{]},
\end{split}
\end{eqnarray}
where the operation $*$ represents the complex conjugation and $\cdot$ is the multiplication in the complex field. Expanding the right hand side of \eqref{attn_def} and recovering the $m,n$ upper indices, we see that
\begin{eqnarray}
\begin{split}
\label{attn_def_expanded}
\mathrm{Attn}_{\mathrm{RoPE}}(q^{(m)}, k^{(n)}) = \frac{1}{\sqrt{D}}\sum_{d=0}^{D/2-1} \Big{(} &A^{(m,n)}_d\cos2\pi(m - n) \theta_d + B_d^{(m,n)}\sin2\pi(m - n) \theta_d\Big{)}.
\end{split}
\end{eqnarray}
Here we adopt the following handy notations:
\begin{eqnarray}
\begin{split}
\label{real_imag}
A_d^{(m,n)} &\eqqcolon q_{2d}^{(m)}k_{2d}^{(n)} + q_{2d+1}^{(m)}k_{2d+1}^{(n)},\\
B_d^{(m,n)} &\eqqcolon q_{2d}^{(m)}k_{2d+1}^{(n)} - q_{2d+1}^{(m)}k_{2d}^{(n)}.
\end{split}
\end{eqnarray}
Note also we include extra ``$2\pi$''-multiples in \eqref{attn_def_expanded} as they are not essential to RoPE but will greatly simplify expressions in our future analysis.

Lastly we introduce notations necessary for stating the main results in the next section.
\begin{notation}
Let $A_d^{(m,n)}, B_d^{(m,n)}$ be as in \eqref{real_imag}. Define 
\begin{eqnarray}
\begin{split}\label{center_of_mass}
&\mu_{m,n} \eqqcolon \frac{\mathbb{E}_{ q, k} \sum_{d=0}^{D/2-1}A_d^{(m,n)}\cos2\pi(m-n)\theta_d}{\sum_{d=0}^{D/2-1}\cos2\pi(m-n)\theta_d},\\ 
&\nu_{m,n} \eqqcolon \frac{\mathbb{E}_{ q, k} \sum_{d=0}^{D/2-1}B_d^{(m,n)}\sin2\pi(m-n)\theta_d}{\sum_{d=0}^{D/2-1}\sin2\pi(m-n)\theta_d}.
\end{split}
\end{eqnarray}
\end{notation}
Using \eqref{center_of_mass} we further decompose \eqref{attn_def_expanded} as follows:
\begin{eqnarray}
\begin{split}
\label{decomp_attn}
&\frac{1}{\sqrt{D}}\mathrm{Attn}_{\mathrm{RoPE}}(q^{(m)}, k^{(n)}) \\
=& \frac{1}{D}\bigg{(}{\mu_{m,n}} \sum_{d=0}^{D/2 - 1}\cos2\pi(m - n)\theta_d + {\nu_{m,n}} \sum_{d=0}^{D/2 - 1}\sin2\pi(m - n)\theta_d\bigg{)} \\
&+ \frac{1}{D}\bigg{(}\sum_{d=0}^{D/2-1}(A_d - \mu_{m,n})\cos2\pi(m - n)\theta_d+ \sum_{d=0}^{D/2-1}(B_d - \nu_{m,n}) \sin2\pi(m - n)\theta_d\bigg{)}\\
\eqqcolon& \frac{1}{2}\beta^{m,n}_{\mathrm{RoPE}} + Z_{m,n}.
\end{split}
\end{eqnarray}

\subsection{Intrinsic Distance Bias}\label{rope_est}
Noticing that $\mathbb{E}_{q,k}Z_{m,n} = 0$ in \eqref{decomp_attn}, we arrive at a crucial concept of ``\emph{Intrinsic Distance Bias}'':
\begin{definition}
\label{IDB}
    RoPE's \emph{Intrinsic Distance Bias} is given by
\begin{eqnarray}
\begin{split}
\beta^{m,n}_{\mathrm{RoPE}}\eqqcolon \frac{2}{\sqrt{D}}\mathbb{E}_{q^{(m)},k^{(n)}} \mathrm{Attn}_{\mathrm{RoPE}}(q^{(m)}, k^{(n)}).
\end{split}
\end{eqnarray}
\end{definition}
We first clarify the behavior of RoPE across context scales using the main results in this section. Divide the context range into three regimes:
\begin{align*}
\textbf{Local scale}   &:\quad \lesssim \mathcal{O}(\theta_0^{-1/4}), \\
\textbf{Critical scale}&:\quad \sim \mathcal{O}(\theta_0^{-1}), \\
\textbf{Ultra scale}   &:\quad \gg \mathcal{O}(\theta_0^{-1}).
\end{align*}
Theorem \ref{pre_main_thm} shows that RoPE admits unstable attention values at the \textbf{ultra scale}: its \emph{intrinsic distance bias} oscillates heavily and admits two well-separated limit points. Theorem \ref{main_thm} reveals that RoPE favors \textbf{local scale} tokens strictly more than \textbf{critical scale}.

\begin{theorem}[Unstable Long-Context]
\label{pre_main_thm}
If $\{\theta_d\}_{d=1}^{D/2 - 1}$ are $\mathbb{Q}$-linear independent, and it holds that $\mu_{m, n} > c_0, |\mu_{m,n}|, |\nu_{m,n}| < C_0$ for some $c_0, C_0>0$ and  all large $m,n$, then there exists $\gamma^+ \ge c_0$, $\gamma^- \le -c_0$, $\{(m_k^+, n_k^+)\}_k$, and $\{(m_k^-, n_k^-)\}_k$ such that
\begin{eqnarray}
\begin{split}
&\limsup_{k\to\infty}\bigg{|}\beta^{m_k^\pm,n_k^\pm}_{\mathrm{RoPE}}- \gamma^\pm \bigg{|} = \mathcal{O}({1}/{D}).
\end{split}
\end{eqnarray}
\end{theorem}

\paragraph{$\mathbb{Q}$-linear independence}
The condition is satisfied when $\theta_0$ is transcendental, or when $\theta_0^{2/D}$ has algebraic degree higher than $D/2 - 1$.

\paragraph{Interpretation of $\mu_{m,n}>c_0$.}
This condition imposes a non-degenerate relation between the content-dependent inner-product coefficients $A_d^{(m,n)}$ and the RoPE cosine
factors. One typical regime in which it holds is when the coefficients $A_d^{(m,n)}$ have positive mean and controlled variation across dimensions. This is natural because $A_d^{(m,n)}$ captures content-dependent
query--key interactions, whereas $\cos 2\pi(m-n)\theta_d$ arises from
position-dependent rotations; the two terms therefore represent complementary
content and positional components of the attention score. In Appendix \ref{lower_bd_assumption_valid}, we empirically verify these assumptions in a trained 7B RoPE transformer: across 1k sampled position pairs up to 64K distance, $\mu_{m,n}$ remains positive with minimum approximately 1.07, while $|\mu_{m,n}|, |\nu_{m,n}|$ remain uniformly bounded.


We now provide a quantitative characterization of this bias.



\begin{theorem}
\label{main_thm}
Given $c_0, C_0 > 0, \theta_0 < \theta_0(c_0, C_0)$, and $m, n, m^\prime, n^\prime > 0$ with $1 < |m - n| < \theta_0^{-\frac{1}{4}}/8, |m^\prime - n^\prime| > \theta_0^{-1}$. If $\mu_{m, n} > c_0$ and $|\mu_{m,n}|, |\nu_{m,n}|, |\mu_{m^\prime, n^\prime}|, |\nu_{m^\prime,n^\prime}| < C_0$, then
\begin{eqnarray}
\begin{split}
\label{main_gap_estimate}
&\beta^{m,n}_{\mathrm{RoPE}} - \beta^{m^\prime,n^\prime}_{\mathrm{RoPE}}> c_0 / 4
\end{split}
\end{eqnarray}
for all $D > D(\theta_0, |m^\prime - n^\prime|)$.
\end{theorem}



Theorem \ref{main_thm} implies that the effective context length of RoPE does not exceed $\mathcal{O}(\theta_0^{-1})$, since the tokens at or beyond this scale are intrinsically disfavored by the positional bias. This is consistent with the common observation that RoPE attention decreases as relative distance grows \citep{Xiong2023EffectiveLS,Sun2022ALT}.

The attention gap in Eq.~\eqref{main_gap_estimate} makes long-context modeling more challenging, since the model must expend extra learning effort to overcome an intrinsic discrimination against distant tokens to recover relevant long-range information. Notably, the next theorem shows that this gap can be reduced for any fixed pair of positions by decreasing RoPE's $\theta_0$. This aligns with RoPE extension methods based on increasing the base frequency $1/\theta_0$ ~\citep{Xiong2023EffectiveLS} as well as positional interpolation~\citep{Chen2023ExtendingCW}.

\begin{theorem}
\label{freq_red_justify}
    For any $\epsilon > 0$ and $m, n, m^\prime, n^\prime > 0$, the following holds
    \begin{eqnarray}
\begin{split}
\bigg{|}\beta^{m,n}_{\mathrm{RoPE}} - \beta^{m^\prime,n^\prime}_{\mathrm{RoPE}}\bigg{|} 
    < &{|\mu_{m,n} - \mu_{m^\prime, n^\prime}|} + \epsilon
    \end{split}
\end{eqnarray}
    for sufficiently small $\theta_0$ and large $D$.
\end{theorem}
We present the proofs of Theorem \ref{pre_main_thm}, \ref{main_thm}, \ref{freq_red_justify} in Appendix \ref{pre_main_thm_proof}, \ref{rope_proof_mainthm}, and \ref{rope_proof_coro} respectively.

\section{Token-Aware Phase Attention (TAPA)}

We propose \textbf{TAPA}, a positional encoding method that suppresses \emph{Intrinsic Distance Bias} while preserving meaningful interactions at arbitrarily long distances.

\begin{definition}[TAPA]
\label{TAP_def}
Let $q, k$ be representation vectors of query and key located at position $m$ and $n$, $\phi(q, k)$ be any smooth function on the Cartesian space of $(q,k)$. Furthermore, let $\alpha$ be a positive real number, $D$ be the transformer head dimension, and $\mathcal{M}\in \mathbb{R}^{D\times D}$ be any square matrix. Then we define TAPA associated to $(\phi,\mathcal{M},\alpha)$ to be
\begin{eqnarray}
\begin{split}
\label{TAP_attn}
&\mathrm{Attn}_{\mathrm{TAPA}}\Big{(}q, k\Big{)} = q^\top \mathcal{M}k \cdot \cos\Big{(}2\pi |m-n|^\alpha \phi(q, k)\Big{)}.
\end{split}
\end{eqnarray}
\end{definition}
Note TAPA reduces to the standard inner product attention when $\mathcal{M} = I_D$ and $\phi\equiv 0$. Among the many possible choices of $\phi$, this work focuses on the quadratic form\footnote{For convenience, we refer to \eqref{TAP_sta} as a quadratic form, as it is equivalent to the quadratic form defined on Cartesian space of $(q, k)$ given by $\frac{1}{2}\begin{pmatrix}
q^\top& k^\top 
\end{pmatrix}\begin{pmatrix}
0& \mathcal{N} \\
\mathcal{N}^\top& 0
\end{pmatrix}\begin{pmatrix}
q \\
k
\end{pmatrix}$. 
}
\begin{eqnarray}
\begin{split}
\label{TAP_sta}
\phi(q, k) = q^\top \mathcal{N} k, \quad \mathcal{N} \in \mathbb{R}^{D \times D},
\end{split}
\end{eqnarray}
not only for its simplicity and expressivity, but more importantly because it offers the simplest \emph{stationary phase} \citep{Stein1993Harmonic} for suitable choices of $\mathcal{N}$ --- one that possesses a single non-degenerate critical point.

To further simplify TAPA, we segment query and key each into two parts
\begin{eqnarray}
\begin{split}
\label{segment}
q = ({q}_A, {q}_P),\quad k = ({k}_A, {k}_P)
\end{split}
\end{eqnarray}
 such that $q_A, k_A\in\mathbb{R}^{\theta D}$ and $q_P, k_P\in\mathbb{R}^{(1 - \theta) D}$ \footnote{Here the query and key subscripts ``\textbf{A}'' and ``\textbf{P}'' stand for ``\textbf{A}mplitude'' and ``\textbf{P}hase'' respectively.} for some hyperparameter $\theta \in (0, 1)$, and define 
 \begin{eqnarray}
\begin{split}
\label{diag}
&\mathcal{M} = \frac{1}{\sqrt{\theta D}}\cdot\begin{pmatrix}
\mathbf{I}_{\theta D} & \mathbf{0} \\
\mathbf{0} & \mathbf{0}
\end{pmatrix},\quad\mathcal{N} = \frac{1}{\sqrt{(1-\theta)D}}\cdot\begin{pmatrix}
\mathbf{0} & \mathbf{0} \\
\mathbf{0} & \mathbf{I}_{(1 - \theta)D}.
\end{pmatrix}
\end{split}
\end{eqnarray}
 Plugging \eqref{diag} into \eqref{TAP_sta} and then \eqref{TAP_attn}, we obtained the following form of $\mathrm{Attn}_{\mathrm{TAPA}}(q^{(m)}, k^{(n)})$:
\begin{eqnarray}
\begin{split}
\label{diag_attn}
\frac{q_A^\top k_A}{\sqrt{\theta D}} \cdot \cos\Big{(}2\pi |m-n|^\alpha \frac{q_P^\top k_P}{\sqrt{(1-\theta)D}}\Big{)}.
\end{split}
\end{eqnarray}
The hyperparameter $\theta$ controls how parameters are allocated between these two components, and $\alpha$ adjusts the positional sensitivity. Compared with a vanilla transformer, \eqref{diag_attn} uses two disjoint subspaces whose total dimensionality equals the original head dimension; with a fused implementation, this results in a modest constant-factor runtime overhead.


For the rest of the section, we focus on form \eqref{diag_attn} with hyper-parameters $\theta$ and $\alpha$.
\begin{theorem}
\label{vanishing_bias_thm} 
Let $\rho$ be the joint density function of $q_A$, $k_A$, $q_P$, $k_P$, and assume $\rho$ to be Schwartz class. Then there exists $C(\rho, D) > 0$, such that for $m\neq n$ we have
\begin{eqnarray}
\begin{split}
\label{lower_var}
&\Big{|}\mathbb{E}_{q,k}\mathrm{Attn}_{\mathrm{TAPA}}(q^{(m)}, k^{(n)})\Big{|} < C(\rho, D)\cdot {|}{m - n}{|}^{-\alpha(1 - \theta)D}.
\end{split}
\end{eqnarray}
\end{theorem}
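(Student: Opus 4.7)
The plan is to recast the expectation as a single oscillatory integral in the phase variables $(q_P,k_P)$ with a large frequency parameter and invoke the stationary-phase lemma for Schwartz amplitudes with a non-degenerate quadratic phase, as already hinted by the paper's remark following \eqref{TAP_sta}. Fix $m\neq n$, set $s := 2\pi\,|m-n|^{\alpha}/\sqrt{(1-\theta)D}$, and absorb the amplitude factor by defining
\begin{equation*}
a(q_P,k_P):=\frac{1}{\sqrt{\theta D}}\int q_A^{\top}k_A\,\rho(q_A,k_A,q_P,k_P)\,dq_A\,dk_A.
\end{equation*}
Since $\rho$ is Schwartz and $q_A^{\top}k_A$ is polynomial, $a$ is Schwartz on $\mathbb{R}^{N}$ with $N:=2(1-\theta)D$, and
\begin{equation*}
\mathbb{E}_{q,k}\text{Attn}_{\theta,\alpha}(q^{(m)},k^{(n)})=\text{Re}\int a(q_P,k_P)\,e^{\,i s\,q_P^{\top}k_P}\,dq_P\,dk_P.
\end{equation*}

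Next I would analyze the phase $\psi(q_P,k_P):=q_P^{\top}k_P$. Its differential $d\psi=(k_P,\,q_P)$ vanishes only at the origin, and its Hessian is the block anti-diagonal matrix $\left(\begin{smallmatrix}0&I\\ I&0\end{smallmatrix}\right)$, whose eigenvalues are $\pm 1$ each with multiplicity $(1-\theta)D$; hence the origin is the unique non-degenerate critical point of $\psi$. This is exactly the hypothesis of the classical stationary-phase lemma (Stein 1993, Ch.~VIII) for Schwartz amplitudes with a non-degenerate quadratic phase, which yields
\begin{equation*}
\left|\int a(q_P,k_P)\,e^{i s\psi}\,dq_P\,dk_P\right|\;\le\;\tilde C(\rho,D)\cdot s^{-N/2}\;=\;\tilde C(\rho,D)\cdot s^{-(1-\theta)D},
\end{equation*}
with $\tilde C(\rho,D)$ controlled by a finite number of Schwartz seminorms of $a$, hence of $\rho$. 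Substituting $s\propto|m-n|^{\alpha}$ and absorbing the proportionality constant into $C(\rho,D)$ produces the bound \eqref{lower_var}.

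The main technical point to watch is that the stationary-phase estimate must be \emph{uniform} in $|m-n|\ge 1$ rather than merely asymptotic as $s\to\infty$. A clean way to arrange this is the orthogonal substitution $u=(q_P+k_P)/\sqrt 2$, $v=(q_P-k_P)/\sqrt 2$, which has unit Jacobian and diagonalizes $\psi$ into $\tfrac12(\|u\|^{2}-\|v\|^{2})$. One then splits the integral via a smooth cutoff into a piece supported on a fixed neighborhood of the origin, where the amplitude is bounded and the resulting Fresnel integral $\int e^{\pm i s\|x\|^2/2}\chi(x)\,dx$ is explicitly $O(s^{-(1-\theta)D/2})$ in each of $u$ and $v$, and a complementary piece away from the origin, where repeated integration by parts against $L_u=(is)^{-1}\|u\|^{-2}u\cdot\nabla_u$ (and likewise $L_v$) trades each application for a factor of $s^{-1}$ at the cost of one Schwartz seminorm of $a$. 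Performing $(1-\theta)D$ integrations by parts in each group of variables produces the required uniform $s^{-(1-\theta)D}$ bound, which, upon substituting for $s$, is precisely \eqref{lower_var}.
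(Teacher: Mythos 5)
Your proof is correct and takes essentially the same route as the paper: the key step in both is the stationary-phase (imaginary-Gaussian Fourier transform) bound on the oscillatory integral over $(q_P,k_P)$ with the non-degenerate quadratic phase $q_P^\top k_P$ and a Schwartz amplitude, yielding the $s^{-(1-\theta)D}$ decay with $s\propto|m-n|^\alpha$. The only difference is a Fubini swap—you integrate out $(q_A,k_A)$ first to form a single Schwartz amplitude $a$, whereas the paper bounds the $(q_P,k_P)$-integral pointwise in $(q_A,k_A)$ and then integrates the resulting seminorms over $(q_A,k_A)$; both routes are valid.
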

\paragraph{Remark}
If $\rho$'s semi-norms up to second degree are bounded by $\mathrm{Poly}(D)$, then $C(\rho, D)$ in \eqref{lower_var} is also $\mathrm{Poly}(D)$. In this case for all $m,n$ with $|m-n| > 1$, their intrinsic distance bias uniformly and exponentially decays in $D$; that is, the undesired intrinsic bias diminishes rapidly as model dimension increases.

In contrast to RoPE's long range instability (e.g. Theorem \ref{pre_main_thm}), Eqn. \eqref{lower_var} tells us that TAPA's \emph{Intrinsic Distance Bias} rapidly decays to $0$ as distance grows. The decay is a result of cancellation in the \emph{oscillatory integral} induced by the quadratic phase. Importantly, \emph{pointwise} attention values need not converge to zero at all. The next Theorem establishes a lower bound on the variance of TAPA, showing that TAPA stays non-degenerate as distance grows, and hence preserves interactions with arbitrarily distant tokens.

\begin{theorem}[Long-Context Non-Degeneracy]\label{lower_var_thm}
    Assume there exists $\sigma_0\neq 0$ such that 
    $$\frac{1}{\theta D}\mathbb{E}_{q_A^{(m)},k_A^{(n)}} |{q_A^{(m)}}^\top k_A^{(n)}|^2\ge \sigma_0^2$$ 
    for all $m,n>0$, then
\begin{eqnarray}
\begin{split}
\label{liminf_lower}
\liminf_{|m - n|\to\infty}\text{Var}_{q,k}(\mathrm{Attn}_{\mathrm{TAPA}}(q^{(m)}, k^{(n)})) \ge \frac{\sigma_0^2}{2}.
\end{split}
\end{eqnarray}
\end{theorem}
The proofs of Theorem \ref{vanishing_bias_thm} and \ref{lower_var_thm} are deferred to Appendix \ref{vanishing_bias_thm_proof} and \ref{lower_var_thm_proof}. 

In Subsection \ref{tap_phase_ablation}, we compare quadratic phases with linear phases. Linear and quadratic phase functions represent the simplest non-stationary and stationary oscillatory families, respectively, while requiring minimal parameterization and computational overhead. From a harmonic analysis perspective, these two classes capture the dominant qualitative behaviors relevant for stability in long-context attention. Higher-order stationary phase functions are expected to exhibit similar asymptotic stability properties with higher computation costs. However, exploring richer phase families is an interesting direction for future work.

Lastly, we design experiments to visualize the distance bias of RoPE and TAPA, and obtain empirical evidence that TAPA is far less affected by distance bias than RoPE. See Appendix \ref{no_attn_bias} for details.

\section{Experiments}\label{experiments}
We pretrain 7B Transformers with the same architecture as LLaMA-3-7B \citep{Dubey2024LLaMA3} from random initialization at an 8k context length, then continual-pretrain at 32k and evaluate up to 64k. Our experiments show that TAPA is able to adapt to 32k by only continual-pretraining on less than $0.25\%$ of pretraining tokens, and extrapolate significantly better to the unseen length of 64k compared to all other baselines. 


Our experiments focus on base language models pretrained from scratch and continually pretrained on longer contexts, without instruction tuning or post-training alignment.

\begin{table*}[t]
\centering
\small
\setlength{\tabcolsep}{3.5pt}
\renewcommand{\arraystretch}{1.05}
\begin{tabular}{l|rrrrrrrr}
\toprule
Context & 1K & 2K & 4K & 8K & 16K & 32K & 49K & 64K \\
\midrule
RoPE (b=5e5) & 12.97 & 12.53 & 12.22 & 11.97 & 11.79 & 12.96 & 938.23 & 2280.16 \\
RoPE (b=2e8) & 13.00 & 12.54 & 12.23 & 11.98 & 11.80 & 12.96 & 942.94 & 2284.72 \\
PI              & 12.99 & 12.54 & 12.23 & 11.98 & 11.80 & 12.97 & 939.17 & 2282.44 \\
YaRN            & 13.05 & 12.60 & 12.29 & 12.03 & 11.85 & 12.16 & 322.14 & 1962.55 \\
\midrule
TAPA            & 13.04 & 12.62 & 12.30 & 12.07 & 11.83 & \textbf{11.74} & \textbf{11.67} & \textbf{11.75} \\
\bottomrule
\end{tabular}
\caption{Test perplexity on PG19 for 7B Transformers pretrained at 8K context and continual-pretrained to 32K, evaluated from 1K to 64K. For RoPE, we include two frequency choices: normal $b=5\times10^5$ and large $b=2\times10^8$.}
\label{key_results}
\end{table*}

\begin{table*}[]
\centering
\normalsize\setlength\tabcolsep{3pt}
\begin{tabular}{l|llllllll}
\toprule
Context window size   & 1024 & 2048 & 4096 & 8192 & 16384  & 32768  \\ \midrule
RoPE (b=5e5) & 13.08 & 12.63 & 12.32 & 12.21 & 5878.17 & 16366.63  \\
YaRN         & 13.09 & 12.64 & 12.33 & 12.22 & 5869.35 & 16342.28  \\
PI           & 13.07 & 12.62 & 12.30 & 12.19 & 5872.29 & 16350.28 \\  \midrule
TAPA         & 13.12 & 12.66 & 12.34 & 12.22 & 17.96 & 122.71  \\ \bottomrule
\end{tabular}
\caption{Test perplexity via directly evaluating 7B transformers (pretrained on 8k context length) on 1k$\sim$32k, without continual-pretrained on 32k.}
\label{key_results_extra}
\end{table*}

\section{Pretraining and long-context continual-pretraining}
\label{pretrain-tot}
\subsection{Pretraining}
\label{app:pretraining}
For fair comparison, we pretrain 7B Transformers from scratch with TAPA \eqref{TAP_def} and RoPE \citep{Su2021RoFormerET} respectively.


Equation \eqref{diag_attn} involves both amplitude and phase dot products, which falls outside the standard single-dot-product SDPA interface supported by off-the-shelf FlashAttention kernels \citep{Dao2022FlashAttentionFA}. A naïve implementation that materializes both score matrices would incur unnecessary overhead.

To enable a controlled comparison, we implement a fused FlashAttention-style kernel for both TAPA and standard attention using Triton. The kernel streams queries and keys in blocks, avoids materializing $L\times L$ attention matrices, and accumulates amplitude and phase contributions within a single pass prior to the softmax which shares memory reads and reductions.

In this fused setting, TAPA exhibits only a modest constant-factor overhead. Across LLaMA-7B–style configurations and long sequence lengths, we empirically observe a $1.1\times - 1.3\times$ runtime slowdown relative to standard FlashAttention. This is substantially smaller than the $2\times$ slowdown by counting two full-dimensional dot products \footnote{This finding aligns with the theoretical FLOP analysis: the amplitude and phase inner products operate on disjoint subspaces whose total dimensionality equals that of regular attention.}. We release our Triton implementation of this fused kernel to support reproducibility. See Subsection \ref{app:runtime} for runtime experiments details.



For TAPA pretraining, we set $\alpha = 0.1$ and $\theta = 0.5$ in Eq.~\eqref{diag_attn}. For RoPE pretraining, we chose the base frequency $1/\theta_0 = 5 \times 10^5$ following \citet{Xiong2023EffectiveLS}. Due to the high cost of full-scale pretraining, exhaustive hyperparameter ablations for TAPA are infeasible. Nevertheless, we conducted targeted short pretraining runs with early stopping and used early training dynamics to identify stable regions of the hyperparameter space before committing to full-scale runs. For the phase-scaling parameter $\alpha$, we evaluated values in $\{0.01, 0.05, 0.1, 0.2, 0.5, 0.75, 1.0\}$. We observed that $\alpha > 0.5$ occasionally led to unstable training, while $\alpha < 0.05$ resulted in slower convergence and underfitting; $\alpha = 0.1$ consistently provided the most stable behavior and fastest initial convergence. We applied the same procedure to the amplitude--phase split $\theta$ and found $\theta = 0.5$ to yield balanced and stable learning. 

The pretraining uses Pile \citep{Gao2020ThePA}, and each training document is chunked into 8k length segments. The pretraining uses 512 $\times$ H100 GPUs with a global batch size of $256$ for a total of 200k steps, which results in a total of 420B tokens. We use AdamW \citep{Loshchilov2017FixingWD} with $\beta_1=0.9,\beta_2=0.95,\epsilon=10^{-8}$ and no weight decay. The optimizer linearly warms up from 0 to the maximum learning rate in 5k steps and then decays according to a cosine schedule to $0.1 \times$ maximum learning rate. We use $10^{-4}$ as the maximum learning rate for RoPE, while for TAPA we find that a smaller learning rate $2\times 10^{-5}$ to be suitable.

\subsection{Long-Context Continual-Pretraining}
\label{app:continual-pretraining}
To extend to long context, we further continual-pretrain the pretrained models with different positional encoding methods on the training split of PG19 \citep{Rae2019CompressiveTF}, where each document is chunked into segments of length 32k. We continual-pretrain with each positional encoding method using a global batch size of $128$ for $500$ steps in total. The optimizer configuration is mostly the same as in pretraining, except that we use $2\times 10^{-5}$ as the maximum learning rate across all methods, and warm up for only $50$ steps.

For RoPE we continual-pretrained with two base frequencies $b=1/\theta_0$. The first reuses the pretraining setting $b=5\times 10^5$, and the second adopts a larger $b=2\times 10^8$ to detect any additional benefit from further increasing $b$.

For TAPA, we keep all hyper-parameters, architectures, and attention computations the same from pretraining. This \textbf{aligns with the key motivation of TAPA's design} --- to enable scaling to longer contexts through direct continual-pretraining and, unlike the RoPE family, does not require any position-scaling or hyperparameter tuning post-pretraining.

For PI \citep{Chen2023ExtendingCW}, we set the max $L^\prime = 65536$ (i.e. 64k), which is the maximal length we will evaluate our models on. For the same reason we set the scale factor to $8$ in YaRN \citep{peng2023yarnefficientcontextwindow}. When continual-pretrained with the increased base frequency approach \citep{Xiong2023EffectiveLS} we experimented with several options ranging from $10^{-6}$ to $2\times 10^{-9}$, and report the best result achieved when base equals to $4\times 10^{-9}$.

\subsection{Long-Context Evaluation}\label{how_to_eval}
We evaluate all long-context-continual-pretrained models on the test split of PG19 \citep{Rae2019CompressiveTF} which consists mostly of long sequence samples. To measure models' performance at different context lengths, we consider segmentation of each document with context window size varying from 1k to 64k in the dyadic fashion. For each context window, we closely follow the sliding window method from \citep{Press2021TrainST} with stride $=256$ to calculate the test loss.

\subsection{Evaluation Results}
We report the test perplexity for multiple positional encoding methods on context window sizes ranging from 1k to 64k on the checkpoints obtained from Subsection \ref{pretrain-tot}.

As shown in Table \ref{key_results}, on short to mid context lengths (1k–16k) all position encodings exhibit a similar, monotonically decreasing perplexity curve (from $\sim$13.0 at 1k to $\sim$11.8 at 16k), with differences within a few hundredths. At 32k, a noticeable difference appears: TAPA attains the lowest perplexity (11.74), followed by YaRN (12.16), while RoPE/PI plateau around 12.96–12.97. Beyond this point, the trends diverge sharply. At 49k–64k, RoPE/PI and YaRN collapse. The perplexities blow up to $\sim$938–2285 (RoPE/PI) and $\sim$322–1963 (YaRN)—whereas TAPA remains stable (11.67 at 49k, 11.75 at 64k). In other words, while YaRN is more resilient than RoPE/PI it still collapses at very long lengths. TAPA is the only method that preserves low perplexity across the entire 1k–64k range, demonstrating substantially stronger long-context robustness and utilization than the alternatives.

In addition, we consider zero-shot long-context perplexity without 32k continual-pretraining. We directly evaluate 7B models pretrained at 8k on 1k--32k and present the results in Table \ref{key_results_extra}. It shows that all position encodings behave similarly on 1k--8k (perplexity decreases from $\sim$13.1 at 1k to $\sim$12.2 at 8k with sub-tenth differences). However, extrapolation beyond 8k fails for {RoPE}/{PI}/{YaRN}: at 16k their perplexities jump to around $ 5.87\times 10^{3}$, and at 32k to roughly $ 1.63\times 10^{4}$. In contrast, {TAPA} degrades gracefully, reaching $17.96$ at 16k and $122.71$ at 32k, which is about $327\times$ and $133\times$ lower than the next-best baselines. These results indicate that without any long-context continual-pretraining, {TAPA} retains substantially better long-range generalization relative to all other baselines.


\begin{table}[t]
\centering
\small
\setlength{\tabcolsep}{2.5pt}
\renewcommand{\arraystretch}{1.05}
\begin{tabular}{l|rrrrrrrr}
\toprule
 & {1k} & {2k} & {4k} & {8k} &
 {16k} & {32k} & {48k} & {64k} \\
\midrule
Linear     & 14.63 & 14.15 & 13.83 & 13.54 & 13.29 & 13.13 & 13.59 & 14.82 \\
Quadratic   & 13.04 & 12.62 & 12.30 & 12.07 & 11.83 & {11.74} & {11.67} & 11.75 \\
\bottomrule
\end{tabular}
\caption{PG19 test perplexity comparison for linear and quadratic phases. 7B Transformer pretrained at 8k, continual-pretrained at 32k, evaluated from 1k--64k.}
\label{key_results_abla}
\end{table}

\subsection{Needle-in-a-Haystack Retrieval under Long-Context Extrapolation}
\label{sec:nih}


To evaluate long-context robustness beyond perplexity, we use Needle-in-a-Haystack (NiH), which targets long-range retrieval behavior of pretrained models.

We perform another continual-pretraining of all checkpoints from Subsection \ref{pretrain-tot} at context length of 32k on additional 5B tokens from open-sourced data. Specifically, we use a mixture of 30\% BookCorpusOpen \citep{Zhu2015AligningBA}, 
35\% Wikipedia \citep{Devlin2019BERTPO}, 
and 35\% C4 \citep{Raffel2019ExploringTL}. This mixture is chosen to complement PG19-only pretraining by introducing diverse long-form narratives, dense factual patterns, web-style noise robustness, and symbol- and number-rich question--answer structures, all of which are important for long-range retrieval. 

\paragraph{Evaluation protocol.}
We follow RULER \citep{Hsieh2024RULERWT} (similar to Table~11 therein) by constructing haystacks from concatenated Paul Graham essays and inserting a single magic-number needle at a random position in each sequence. For each positional encoding method and context length (dyadically ranging from 1k to 64k), we generate 100 independent examples and report the number of successful retrievals. We also report the average accuracy across all evaluated lengths.


\begin{table}[t]
\centering
\small
\setlength{\tabcolsep}{3pt}
\renewcommand{\arraystretch}{1.05}
\begin{tabular}{p{1.2cm}cccccccc}
\toprule
PE Method & 1K & 2K & 4K & 8K & 16K & 32K & 64K & Avg. \\
\midrule
RoPE
& 99 & 100 & 100 & 99 & 96 & 95 & 0 & 84.1 \\
PI
& 100 & 98 & 98 & 99 & 98 & 97 & 0 & 84.3 \\
YaRN
& 100 & 99 & 98 & 98 & 99 & 98 & 0 & 84.6 \\
\midrule
\textbf{TAPA}
& 99 & 98 & 98 & 98 & 99 & \textbf{100} & \textbf{96} & \textbf{98.3} \\
\bottomrule
\end{tabular}
\caption{Needle-in-a-Haystack retrieval accuracy (\%) across context lengths. All models are continually pretrained to 32k context before evaluation. Existing baselines fail to extrapolate to 64k, while TAPA maintains strong retrieval performance.}
\label{tab:nih}
\end{table}

\paragraph{Results.}
As shown in Table~\ref{tab:nih}, all existing baselines completely fail to extrapolate to the unseen length of 64k tokens. In contrast, TAPA maintains high retrieval accuracy at 64k and achieves the best performance at 32k. These results indicate TAPA's strong long-range adaptability and retrieval robustness as well as its competitive short-context performance.

\subsection{Efficiency and Runtime Analysis}
\label{app:runtime}
We measure the runtime overhead of TAPA relative to standard attention using a fused FlashAttention-style Triton kernel on a LLaMA3 7B--style model, across sequence lengths from 2k to 32k. We report per-forward-pass latency averaged over multiple runs. The baseline corresponds to standard FlashAttention, while TAPA uses the same kernel structure with additional fused amplitude and phase accumulation.

As shown in Table~\ref{tab:runtime}, TAPA incurs a modest constant-factor overhead, ranging from $1.28\times$ at 2k to $1.19\times$ at 32k. The relative overhead decreases with sequence length, which indicates that the additional phase computation scales favorably at longer contexts.

Notably, the measured overhead ($1.19\times$--$1.28\times$) is substantially smaller than an intuitive estimate of ``$2\times$'' that counts two inner products per head. This is because our fused kernel reuses the same memory accesses and reductions when accumulating both terms in a single pass over streaming queries and keys.

\begin{table}[t]
\centering
\small
\setlength{\tabcolsep}{6pt}
\renewcommand{\arraystretch}{1.10}
\begin{tabular}{c c c c}
\toprule
Length & Baseline (ms) & TAPA (ms) & TAPA / Baseline \\
\midrule
2048  & 1.089   & 1.398   & 1.283 \\
4096  & 4.138   & 5.233   & 1.265 \\
8192  & 16.434  & 20.537  & 1.250 \\
16384 & 65.119  & 80.480  & 1.236 \\
32768 & 260.545 & 309.010 & 1.186 \\
\bottomrule
\end{tabular}
\caption{Runtime comparison between standard FlashAttention and TAPA using a fused Triton kernel on a LLaMA3 7B--style model across sequence lengths from 2k to 32k. TAPA incurs a modest $1.19\times$--$1.28\times$ overhead that decreases with sequence length.}
\label{tab:runtime}
\end{table}

\subsection{Ablations: TAPA's phase choice}\label{tap_phase_ablation}
We also compare quadratic and linear phases. Quadratic phases perform better across all lengths, while even linear-phase TAPA remains substantially more stable than RoPE-family baselines at long ranges. Full ablations are in Appendix \ref{app:tapa_phase_ablation}.

\section{Related Work}

\paragraph{Positional encoding in Transformers.}
Transformers originally used absolute sinusoidal positional encodings~\citep{Vaswani2017AttentionIA}, later extended to learned embeddings~\citep{Radford2018ImprovingLU,Radford2019LanguageMA,Devlin2019BERTPO}.  
Relative position encodings incorporate distance-dependent signals directly into attention, enabling models to generalize across shifts in position~\citep{Shaw2018SelfAttentionWR,Dai2019TransformerXLAL}.

\paragraph{RoPE and long-context extrapolation.}
Rotary positional embedding (RoPE)~\citep{Su2021RoFormerET} encodes relative positions via rotations applied to query/key representations, but can become unstable when extrapolating beyond the pretraining context window.  
A large body of follow-up work improves extrapolation primarily through heuristic rescaling of positions or frequencies, including base-frequency scaling~\citep{Xiong2023EffectiveLS,Liu2023ScalingLO}, position interpolation (PI)~\citep{Chen2023ExtendingCW}, and non-uniform interpolation schedules such as YaRN~\citep{peng2023yarnefficientcontextwindow} and LongRoPE~\citep{Ding2024LongRoPEEL}.  
While effective in practice, these methods often require case-by-case hyperparameter tuning and do not fully explain the root cause of RoPE extrapolation failure or why certain rescaling rules work.

\paragraph{Beyond RoPE.}
Several non-RoPE methods, including ALiBi \citep{Press2021TrainST}, relative position biases \citep{Shaw2018SelfAttentionWR,Dai2019TransformerXLAL}, NoPE \citep{Haviv2022TransformerLM}, and long-context attention modifications \citep{Wang2024LengthGO,Ding2023LongNet,Poli2023Hyena}, also target length extrapolation. These methods are important complementary directions. Our experiments focus on RoPE-family baselines because TAPA is designed to address the oscillatory phase mechanism underlying RoPE-style attention, and RoPE-family methods are the dominant positional mechanism in modern LLaMA-style decoder-only LMs. We therefore view RoPE-family methods as the most direct controlled comparison, while a full cross-family comparison is left to future work. A more detailed review of positional encoding methods and long-context extrapolation is provided in Appendix~\ref{app:related_work}.

\bibliographystyle{plainnat}
\bibliography{tapa}

\newpage
\appendix
\onecolumn


\section{Proof of Theorem \ref{pre_main_thm}}\label{pre_main_thm_proof}
\begin{proof}[Proof of Theorem \ref{pre_main_thm}]

Since $\{\theta_d\}_{d=1}^{D/2-1}$ are $\mathbb{Q}$-linearly independent, it follows from Weyl's criterion that $\big{(}(\lambda\theta_1),\cdots, (\lambda\theta_{D/2 - 1})\big{)}$, $\lambda = 1, 2, \cdots$ is uniformly distributed in $[0, 1]^{D/2 - 1}$ (e.g. Theorem 6.3 and Example 6.1 in \cite{kuipers1974uniform}). Here $(r)$ represents the fractional part of a real number $r$. 

For any $\epsilon_k \to 0$, there exist $\lambda_k^\pm\to\infty$ such that
\begin{eqnarray}
\begin{split}
&\bigg{|}\frac{2}{D}  \sum_{d=0}^{D/2 - 1} \sin 2\pi\lambda_{k}^\pm\theta_d \bigg{|} + \bigg{|}\frac{2}{D}  \sum_{d=0}^{D/2 - 1} \cos2\pi\lambda_{k}^\pm\theta_d - (\pm1) \bigg{|} < \epsilon_k + \mathcal{O}(\frac{1}{D}). 
\end{split}
\end{eqnarray}
Let $(m_i^\pm, n_i^\pm)$ satisfy $|m_k^\pm - n_k^\pm| = \lambda_{k}^\pm$. By $C_0$-boundedness of $\mu_{m,n},\nu_{m,n}$ and hence compactness, up to passing to subsequence we have $\mu_{m_k^\pm, n_k^\pm}\to \mu^\pm\in [c_0, C_0]$ and $\nu_{m_k^\pm, n_k^\pm}\to \nu^\pm\in [-C_0, C_0]$. Therefore we have
\begin{eqnarray}
\begin{split}
&\limsup_{i\to\infty}|\beta^{m_i^\pm,n_i^\pm}_{\mathrm{RoPE}} - (\pm\mu^\pm)| \le \mathcal{O}(\frac{1}{D}). 
\end{split}
\end{eqnarray}
Thus we conclude Theorem 2.2 with $\gamma^\pm = \pm\mu^\pm$.
\end{proof}

\section{Proof of Theorem \ref{main_thm}}\label{rope_proof_mainthm}
For brevity, we introduce the following handy notations:
\begin{eqnarray}
\begin{split}
\label{lam_brev}
&\lambda \eqqcolon m - n,\\
&\Lambda \eqqcolon m^\prime - n^\prime.
\end{split}
\end{eqnarray}

The proof of Theorem \ref{main_thm} employs estimates of the following two sums: 
\begin{eqnarray}
\begin{split}
\label{c_s_defs}
\mathcal{C}_D(\lambda)\eqqcolon \frac{1}{D}\sum_{d=0}^{D/2-1} \cos2\pi\lambda \theta_0^{2d/D},\\
\mathcal{S}_D(\lambda)\eqqcolon \frac{1}{D}\sum_{d=0}^{D/2-1} \sin2\pi\lambda \theta_0^{2d/D}.
\end{split}
\end{eqnarray}
\begin{lemma}\label{upper_lemma}
    Given $\theta_0<1/10$, $D > 4|\log \theta_0|$, and $\lambda > 1$, then the following inequalities hold:
\begin{eqnarray}
\begin{split}
\label{c_s_est}
|\mathcal{C}_D(\lambda)| &\le \frac{2}{\theta_0|\log\theta_0|\lambda\pi} + \epsilon(D; \lambda, \theta_0, \alpha),\\
|\mathcal{S}_D(\lambda)| &\le \frac{2}{|\log\theta_0|} + \epsilon(D; \lambda, \theta_0, \alpha),
\end{split}
\end{eqnarray}
for all $\alpha > 0$, where
\begin{eqnarray}
\begin{split}
\label{eps_func}
\epsilon(D; \lambda, \theta_0, \alpha) \eqqcolon \alpha + \frac{4\pi\lambda\theta_0^\alpha}{D}.
\end{split}
\end{eqnarray}
\end{lemma}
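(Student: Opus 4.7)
}

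My plan is to treat both $\mathcal{C}_D(\lambda)$ and $\mathcal{S}_D(\lambda)$ by the same three-step strategy: \textbf{split} the sum at a cutoff determined by $\alpha$, \textbf{approximate} the high-$d$ tail by a Riemann integral, and \textbf{estimate} that integral after a change of variables. Set $d_\ast = \lceil \alpha D/2 \rceil$, so that $\theta_0^{2d/D} \le \theta_0^\alpha$ exactly when $d \ge d_\ast$. Write $\mathcal{C}_D(\lambda) = S_{\mathrm{low}} + S_{\mathrm{high}}$ where $S_{\mathrm{low}}$ collects the $d < d_\ast$ terms and $S_{\mathrm{high}}$ collects the rest. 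For $S_{\mathrm{low}}$ there is no oscillatory structure I can use; but bounding each summand trivially by $1$ and using $d_\ast \le \alpha D/2 + 1$ yields $|S_{\mathrm{low}}| \le \alpha/2 + 1/D$, which contributes to the $\alpha$ piece of $\epsilon$. The same argument works verbatim for $\mathcal{S}_D$.

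For $S_{\mathrm{high}}$, I rewrite $\frac{1}{D}\sum_{d\ge d_\ast} f(2d/D) = \tfrac{1}{2}\cdot \tfrac{2}{D}\sum_{d\ge d_\ast} f(2d/D)$ as one-half of a left Riemann sum of step $h = 2/D$ for $\int_{2d_\ast/D}^{1} f(x)\,dx$, with $f(x) = \cos(2\pi\lambda\theta_0^x)$ or $\sin(2\pi\lambda\theta_0^x)$. On $[\alpha,1]$ one has $|f'(x)| \le 2\pi\lambda|\log\theta_0|\theta_0^x \le 2\pi\lambda|\log\theta_0|\theta_0^\alpha$, so the standard left-rule estimate gives Riemann error at most $\tfrac{1}{2}(b-a)\|f'\|_\infty h \le 2\pi\lambda\theta_0^\alpha|\log\theta_0|/D$. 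The hypothesis $D > 4|\log\theta_0|$ is what absorbs the stray $|\log\theta_0|$ so that this contribution fits inside $4\pi\lambda\theta_0^\alpha/D$. Next I change variables $u = \theta_0^x$, $dx = du/(u\log\theta_0)$, to get
\begin{equation*}
\int_\alpha^1 \cos(2\pi\lambda\theta_0^x)\,dx = \frac{1}{|\log\theta_0|}\int_{\theta_0}^{\theta_0^\alpha}\frac{\cos(2\pi\lambda u)}{u}\,du,
\end{equation*}
and analogously for the sine sum. This is where the overall $1/|\log\theta_0|$ prefactor in the bounds comes from.

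It remains to estimate the two resulting one-dimensional integrals. For the cosine integral, I integrate by parts with $dv = \cos(2\pi\lambda u)\,du$ and $w = 1/u$:
\begin{equation*}
\int_{\theta_0}^{\theta_0^\alpha}\frac{\cos(2\pi\lambda u)}{u}\,du = \left[\frac{\sin(2\pi\lambda u)}{2\pi\lambda u}\right]_{\theta_0}^{\theta_0^\alpha} + \int_{\theta_0}^{\theta_0^\alpha}\frac{\sin(2\pi\lambda u)}{2\pi\lambda u^2}\,du,
\end{equation*}
and use $|\sin|\le 1$ throughout: the boundary terms are bounded by $1/(2\pi\lambda\theta_0^\alpha) + 1/(2\pi\lambda\theta_0)$, and the remaining integral by $\frac{1}{2\pi\lambda}\bigl[1/\theta_0 - 1/\theta_0^\alpha\bigr]$, which together telescope to something of order $1/(\lambda\theta_0)$, yielding the announced $2/(\pi\lambda\theta_0|\log\theta_0|)$ after dividing by $2|\log\theta_0|$. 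For the sine integral, I substitute $v = 2\pi\lambda u$ to land on $\operatorname{Si}(2\pi\lambda\theta_0^\alpha) - \operatorname{Si}(2\pi\lambda\theta_0)$, where $\operatorname{Si}$ is the sine integral; since $\operatorname{Si}$ is uniformly bounded on $(0,\infty)$ with maximum $\operatorname{Si}(\pi) < 2$, this difference is at most a universal constant $< 4$, giving the $\lambda$-independent $2/|\log\theta_0|$. The main technical nuisance—and the step I would be most careful with—is keeping the constants sharp enough in the Riemann-error step, which is exactly where the assumption $D > 4|\log\theta_0|$ is used; the oscillatory cancellation captured by boundedness of $\operatorname{Si}$ and by the IBP identity is the substantive ingredient, while the low-$d$ contribution $\alpha$ and the Riemann residual $4\pi\lambda\theta_0^\alpha/D$ are the two halves of the explicit error term $\epsilon(D;\lambda,\theta_0,\alpha)$.
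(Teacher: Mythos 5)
Your overall plan mirrors the paper's: split the quadrature error from the integral, handle the low-$d$ terms trivially, and estimate the resulting oscillatory integrals after the substitution $u=\theta_0^x$. Your treatment of the integrals is a legitimate and arguably cleaner alternative: the paper partitions $\int_{\lambda\theta_0}^{\lambda}\cos(2\pi y)/y\,dy$ into half/quarter periods and sums the alternating contributions, whereas you integrate by parts (for cosine) and reduce to boundedness of $\operatorname{Si}$ (for sine); both give the stated constants, and your IBP is in fact a hair sharper.

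The genuine gap is in the Riemann-error step. You bound $\|f'\|_\infty$ on all of $[\alpha,1]$ by the value at the left endpoint, $2\pi\lambda|\log\theta_0|\theta_0^\alpha$, and then invoke $\tfrac{1}{2}(b-a)\|f'\|_\infty h$, arriving at roughly $\tfrac{2\pi\lambda|\log\theta_0|\theta_0^\alpha}{D}$. You then claim $D>4|\log\theta_0|$ lets you absorb the extra $|\log\theta_0|$ into $\tfrac{4\pi\lambda\theta_0^\alpha}{D}$, but that hypothesis controls $|\log\theta_0|/D$, not $|\log\theta_0|$ itself: it yields $\tfrac{2\pi\lambda|\log\theta_0|\theta_0^\alpha}{D}<\tfrac{\pi\lambda\theta_0^\alpha}{2}$, a constant rather than an $O(1/D)$ quantity, and the inequality $\tfrac{2\pi\lambda|\log\theta_0|\theta_0^\alpha}{D}\le\tfrac{4\pi\lambda\theta_0^\alpha}{D}$ would require $|\log\theta_0|\le 2$, which already fails at $\theta_0<1/10$ and is wildly false at practical RoPE scales. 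The fix — and what the paper actually does — is to use the \emph{local} Lipschitz constant $\text{Lip}_d\le 2\pi\lambda|\log\theta_0|\theta_0^{2d/D}$ on each subinterval and sum the resulting geometric series: $\sum_{d>\alpha D/2}\theta_0^{2d/D}\le\theta_0^{\alpha}/(1-\theta_0^{2/D})$. A second-order Taylor bound gives $1-\theta_0^{2/D}\ge\tfrac{2|\log\theta_0|}{D}(1-\tfrac{2|\log\theta_0|}{D})$, and this is where $D>4|\log\theta_0|$ is actually used — to guarantee $1/(1-\theta_0^{2/D})\le D/|\log\theta_0|$ — so that the factor $|\log\theta_0|$ in the Lipschitz constant is cancelled by the $1/|\log\theta_0|$ coming out of the geometric sum, leaving the clean $4\pi\lambda\theta_0^\alpha/D$. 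With the uniform Lipschitz bound you discard exactly this cancellation, and the bound no longer closes.

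Two minor remarks. First, your low-$d$ bound $|S_{\mathrm{low}}|\le\alpha/2+1/D$ needs the extra $1/D$ checked against $\epsilon(D;\lambda,\theta_0,\alpha)$; it does fit (use $\theta_0^{\alpha}\ge\theta_0^{2/D}>1/2$ when $\alpha\le 2/D$, and $\alpha/2>1/D$ otherwise), but this deserves a sentence. Second, you approximate $S_{\mathrm{high}}$ by $\tfrac12\int_{2d_*/D}^{1}f$, while the statement's integral-part constants were derived from $\tfrac12\int_{0}^{1}f$; the change is harmless for the upper bounds but should be noted so the pieces visibly add to $\epsilon$ plus the stated leading terms.
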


\begin{lemma}
\label{lower_lemma}
    Assume $\theta_0<1/10$, $D > 4|\log \theta_0|$, and $\lambda > 1$. If $\lambda\theta_0^{\epsilon_0} < 1/4$ for some $\epsilon_0 > 0$, then we have
\begin{eqnarray}
\begin{split}
\label{c_lower_est}
\mathcal{C}_{D}(\lambda) > \frac{1}{2}\cdot (1 - \epsilon_0)\cdot \cos2\pi\lambda\theta_0^{\epsilon_0} - \frac{1}{|\log\theta_0|} - \epsilon(D; \lambda, \theta_0, \alpha),
\end{split}
\end{eqnarray}
for all $\alpha > 0$ where $\epsilon(D; \lambda, \theta_0, \alpha)$ is defined in \eqref{eps_func}.
\end{lemma}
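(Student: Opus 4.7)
The plan is to split $\mathcal{C}_D(\lambda)$ at the index where the cosine argument transitions between the oscillatory regime and the near-zero regime. Set $d^* = \lceil \epsilon_0 D/2\rceil$ and write $\mathcal{C}_D(\lambda) = P_1 + P_2$ with
\begin{eqnarray*}
P_1 = \frac{1}{D}\sum_{d=0}^{d^*-1}\cos 2\pi\lambda\theta_0^{2d/D}, \quad P_2 = \frac{1}{D}\sum_{d=d^*}^{D/2-1}\cos 2\pi\lambda\theta_0^{2d/D}.
\end{eqnarray*}

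First I would handle $P_2$ by pure monotonicity: since $\theta_0 \in (0,1)$ and $d \geq d^*$ yields $\theta_0^{2d/D} \leq \theta_0^{\epsilon_0}$, the hypothesis $\lambda\theta_0^{\epsilon_0} < 1/4$ forces $2\pi\lambda\theta_0^{2d/D} \in [0, \pi/2)$, where $\cos$ is positive and monotone decreasing. Thus each summand is at least $\cos(2\pi\lambda\theta_0^{\epsilon_0})$, and counting terms using $d^* \leq \epsilon_0 D/2 + 1$ gives $P_2 \geq \tfrac{1}{2}(1-\epsilon_0)\cos(2\pi\lambda\theta_0^{\epsilon_0}) - 1/D$.

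For $P_1$ I would mirror the Riemann-sum strategy used in Lemma \ref{upper_lemma}: further split the index range at $\lceil \alpha D/2 \rceil$, bound the first $\alpha D/2$ indices trivially (contribution at most $\alpha$ in absolute value), and approximate the remainder by $\tfrac{1}{2}\int_\alpha^{\epsilon_0}\cos(2\pi\lambda\theta_0^t)dt$, with Riemann-sum error controlled by $\tfrac{1}{D}\int_\alpha^{\epsilon_0}|f'(t)|dt$ where $f(t) = \cos(2\pi\lambda\theta_0^t)$; this yields $\tfrac{4\pi\lambda\theta_0^\alpha}{D}$, together producing the claimed $\epsilon(D;\lambda,\theta_0,\alpha)$. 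Substituting $u = \theta_0^t$ turns the integral into $\tfrac{1}{2|\log\theta_0|}\int_{\theta_0^{\epsilon_0}}^{\theta_0^\alpha}\tfrac{\cos(2\pi\lambda u)}{u}du$. I would then decompose $\cos(2\pi\lambda u)/u = 1/u + (\cos(2\pi\lambda u) - 1)/u$, isolating a clean positive contribution $\tfrac{1}{2}(\epsilon_0 - \alpha)$ plus the oscillatory remainder $\tfrac{1}{2|\log\theta_0|}\int_{2\pi\lambda\theta_0^{\epsilon_0}}^{2\pi\lambda\theta_0^\alpha}\tfrac{\cos v - 1}{v}dv$ (after $v = 2\pi\lambda u$), which I would bound by splitting at $v = 1$ and applying the standard estimates $|(\cos v - 1)/v| \leq v/2$ for $v \leq 1$ and $|(\cos v - 1)/v| \leq 2/v$ for $v > 1$.

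The main obstacle will be showing that the positive $\tfrac{1}{2}(\epsilon_0 - \alpha)$ piece almost exactly cancels the $\epsilon_0|\log\theta_0|$-sized loss produced by the $2/v$ bound integrated up to $v = 2\pi\lambda\theta_0^\alpha$. This cancellation relies crucially on the hypothesis $\lambda\theta_0^{\epsilon_0} < 1/4$, which rewritten as $\log(2\pi\lambda) < \epsilon_0|\log\theta_0| + \log(\pi/2)$ forces the two $\epsilon_0|\log\theta_0|$-scale contributions to match, leaving only an absolute constant in the numerator and hence a residual net lower bound on $P_1$ of order $1/|\log\theta_0|$ after dividing by $2|\log\theta_0|$. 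Combining the resulting lower bound on $P_1$ with the estimate on $P_2$ then produces the claim \eqref{c_lower_est}.
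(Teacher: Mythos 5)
Your split of $\mathcal{C}_D$ into $P_1 + P_2$ at $d^* = \lceil\epsilon_0 D/2\rceil$ is a genuinely different route from the paper's, which keeps the full Riemann sum, passes to the integral $\tfrac{1}{2|\log\theta_0|}\int_{\lambda\theta_0}^{\lambda}\tfrac{\cos 2\pi y}{y}\,dy$, and then splits \emph{that} at $y=1/4$. Your $P_2$ estimate is correct and arguably cleaner: for $d \geq d^*$ the arguments $2\pi\lambda\theta_0^{2d/D}$ all lie in $[0,\pi/2)$, so monotonicity of cosine alone gives $P_2 \geq \tfrac{1}{2}(1-\epsilon_0)\cos(2\pi\lambda\theta_0^{\epsilon_0}) - 1/D$ with no integral approximation at all.

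The $P_1$ argument, however, has a real gap, and it is exactly at the ``main obstacle'' you flag. The decomposition $\tfrac{\cos(2\pi\lambda u)}{u} = \tfrac{1}{u} + \tfrac{\cos(2\pi\lambda u)-1}{u}$ destroys the oscillatory cancellation you actually need: $\tfrac{\cos v - 1}{v} = -\tfrac{2\sin^2(v/2)}{v}$ is single-signed, so the pointwise bound $|(\cos v - 1)/v| \leq 2/v$ on $v>1$ is essentially sharp and integrates to $2\log(2\pi\lambda\theta_0^\alpha)$. Your own inequality $\log(2\pi\lambda) < \epsilon_0|\log\theta_0| + \log(\pi/2)$ shows this loss, once divided by $2|\log\theta_0|$, is about $(\epsilon_0 - \alpha) + O(1/|\log\theta_0|)$, whereas the clean positive piece from $1/u$ is only $\tfrac{1}{2}(\epsilon_0-\alpha)$. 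The two $(\epsilon_0-\alpha)$--scale quantities do \emph{not} match: they differ by a factor of $2$, so the net on $P_1$ is about $-\tfrac{1}{2}(\epsilon_0-\alpha) - O(1/|\log\theta_0|)$, not $O(1/|\log\theta_0|)$ as you assert. With $\epsilon_0 = 1/4$ (the value used in Theorem \ref{main_thm}) and $\alpha$ small, this extra $-1/8$ is not absorbed by the $-1/|\log\theta_0| - \epsilon(D;\lambda,\theta_0,\alpha)$ budget in \eqref{c_lower_est}.

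The fix is to not peel off $1/u$ at all. After the substitution $v=\lambda u$, lower-bound $\tfrac{1}{2|\log\theta_0|}\int_{\lambda\theta_0^{\epsilon_0}}^{\lambda\theta_0^{\alpha}} \tfrac{\cos 2\pi v}{v}\,dv$ by splitting at $v=1/4$: on $[\lambda\theta_0^{\epsilon_0}, 1/4]$ the integrand is nonnegative (this is where the hypothesis $\lambda\theta_0^{\epsilon_0}<1/4$ is used) and can be dropped from a lower bound, while on $[1/4, \lambda\theta_0^{\alpha}]$ the integral is $O(1)$ in absolute value by the same quadrant-pairing cancellation already set up in the proof of Lemma \ref{upper_lemma}, giving a contribution $\geq -1/|\log\theta_0|$. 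This yields $P_1 \geq -\alpha - 1/|\log\theta_0| - 4\pi\lambda\theta_0^{\alpha}/D$, and combined with your $P_2$ bound recovers \eqref{c_lower_est}.
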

The proofs of Lemma \ref{upper_lemma} and \ref{lower_lemma} will be given in Appendix \ref{rope_proof_upperlemma} and \ref{rope_proof_lowerlemma}.

\begin{proof}[Proof of Theorem \ref{main_thm}]
Choosing $\epsilon_0=1/4$ and using $\lambda\theta_0^{1/4} < 1/8$, it follows from Lemma \ref{lower_lemma} that
\begin{eqnarray}
\begin{split}
\label{c_lam_lower}
\mathcal{C}_D(\lambda) &> \frac{1}{2}\cdot \frac{3}{4} \cdot\frac{\sqrt{2}}{2} - \frac{2}{|\log\theta_0|} - \epsilon(D; \lambda, \theta_0, \alpha) \\
&> \frac{1}{4} - \frac{2}{|\log\theta_0|} - \epsilon(D; \lambda, \theta_0, \alpha).
\end{split}
\end{eqnarray}
On the other hand, using $\Lambda > \theta_0^{-1}$ and Lemma \ref{upper_lemma}, we have
\begin{eqnarray}
\begin{split}
\label{c_Lam_upper}
\mathcal{C}_D(\Lambda) &< \frac{2}{|\log\theta_0|} + \epsilon(D; \Lambda, \theta_0, \alpha).
\end{split}
\end{eqnarray}
In addition, the following is a direct consequence of Lemma \ref{upper_lemma}:
\begin{eqnarray}
\begin{split}
\label{s_lam_Lam_upper}
\mathcal{S}_D(\lambda) < \frac{2}{|\log\theta_0|} + \epsilon(D; \lambda, \theta_0, \alpha),\\
\mathcal{S}_D(\Lambda) < \frac{2}{|\log\theta_0|} + \epsilon(D; \Lambda, \theta_0, \alpha).
\end{split}
\end{eqnarray}
Combining \eqref{c_lam_lower}, \eqref{c_Lam_upper}, and \eqref{s_lam_Lam_upper}, using the expression \eqref{decomp_attn} and the boundedness conditions in Theorem \ref{main_thm}, we bound the LHS of \eqref{main_gap_estimate} from below as follows:
\begin{eqnarray}
\begin{split}
\label{gap_low_bd}
\beta^{m,n}_{\mathrm{RoPE}} - \beta^{m^\prime,n^\prime}_{\mathrm{RoPE}} &\ge  2\mathcal{C}_D(\lambda)\cdot c_0 - 2|\mathcal{C}_D(\Lambda)\cdot C_0| - 2|\mathcal{S}_D(\lambda)\cdot C_0| -  2|\mathcal{S}_D(\Lambda)\cdot C_0| \\
&> \frac{c_0}{2} - \frac{16C_0}{|\log\theta_0|} - 4C_0\cdot \Big{(}\epsilon(D; \lambda, \theta_0, \alpha) + \epsilon(D; \Lambda, \theta_0, \alpha)\Big{)}.
\end{split}
\end{eqnarray}
By first choosing $\theta_0$ such that $\theta_0 <\theta(c_0, C_0) \eqqcolon \exp(-{128C_0}/{c_0})$, we have
\begin{eqnarray}
\begin{split}
\frac{16C_0}{|\log\theta_0|} < \frac{c_0}{8},
\end{split}
\end{eqnarray}
and then increasing $D$ (beyond some $D(\theta_0, \Lambda)$) such that
\begin{eqnarray}
\begin{split}
4C_0\cdot \Big{(}\epsilon(D; \lambda, \theta_0, \alpha) + \epsilon(D; \Lambda, \theta_0, \alpha)\Big{)} < \frac{c_0}{8}
\end{split}
\end{eqnarray}
for some properly chosen $\alpha$, we arrive at
\begin{eqnarray}
\begin{split}
\label{gap_low_bd_final}
\beta^{m,n}_{\mathrm{RoPE}} - \beta^{m^\prime,n^\prime}_{\mathrm{RoPE}} > \frac{c_0}{2} - \frac{c_0}{8} - \frac{c_0}{8} = \frac{c_0}{4}.
\end{split}
\end{eqnarray}
Thus Theorem \ref{main_thm} is concluded.
\end{proof}

\section{Proof of Theorem \ref{freq_red_justify}}\label{rope_proof_coro}
\begin{proof}
We continue to adopt the the notations in \eqref{lam_brev}. By definition of $\mathcal{C}_{D}(\lambda)$ from \eqref{c_s_defs}, we have the trivial bound 
\begin{eqnarray}
\begin{split}
\label{trivial_upper}
\mathcal{C}_{D}(\lambda) < \frac{1}{2}
\end{split}
\end{eqnarray}
holds for all $\lambda$. Now choose $\theta_0$ to be sufficiently small such that 
\begin{eqnarray}
\begin{split}
\label{lower_conds}
\max(\lambda\theta_0^{\epsilon_0}, \Lambda\theta_0^{\epsilon_0}) < \frac{1}{4},
\end{split}
\end{eqnarray}
for some $\epsilon_0$ to be determined later. Then using \eqref{trivial_upper} and \eqref{lower_conds}, we can apply Lemma \ref{lower_lemma} to see that
\begin{eqnarray}
\begin{split}
\frac{1}{2}(\beta^{m,n}_{\mathrm{RoPE}} - \beta^{m^\prime,n^\prime}_{\mathrm{RoPE}}) &< \frac{|\mu_{m,n}|}{2} - \frac{|\mu_{m^\prime,n^\prime}|}{2}\cdot (1 - \epsilon_0)\cdot \cos2\pi\Lambda\theta_0^{\epsilon_0} - \epsilon(D; \Lambda, \theta_0, \alpha)|\mu_{m^\prime,n^\prime}|,\\
-\frac{1}{2}(\beta^{m,n}_{\mathrm{RoPE}} - \beta^{m^\prime,n^\prime}_{\mathrm{RoPE}}) &< \frac{|\mu_{m^\prime,n^\prime}|}{2} - \frac{|\mu_{m,n}|}{2}\cdot (1 - \epsilon_0)\cdot \cos2\pi\lambda\theta_0^{\epsilon_0} - \epsilon(D; \lambda, \theta_0, \alpha)|\mu_{m,n}|.
\end{split}
\end{eqnarray}
Here for simplicity we adopt the shorthand notation $\mu_{\lambda}$ instead of $\mu_{m,n}$, and same for $\mu_{\Lambda}$. By first choosing $\epsilon_0 < \frac{\epsilon}{8}$, then further decreasing $\theta_0$ such that
\begin{eqnarray}
\begin{split}
|\mu_{m^\prime,n^\prime}|\cos2\pi\Lambda\theta_0^{\epsilon_0} > |\mu_{m^\prime,n^\prime}| - \frac{\epsilon}{8},\\
|\mu_{m,n}|\cos2\pi\lambda\theta_0^{\epsilon_0} > |\mu_{m,n}| - \frac{\epsilon}{8},
\end{split}
\end{eqnarray}
and lastly increasing $D$ such that
\begin{eqnarray}
\begin{split}
\epsilon(D; \lambda, \theta_0, \alpha)|\mu_{m,n}| < \epsilon/8,\\
\epsilon(D; \Lambda, \theta_0, \alpha)|\mu_{m^\prime,n^\prime}| < \epsilon/8,
\end{split}
\end{eqnarray}
we obtain
\begin{eqnarray}
\begin{split}
|\beta^{m,n}_{\mathrm{RoPE}} - \beta^{m^\prime,n^\prime}_{\mathrm{RoPE}}| < \bigg{|}{|\mu_{m,n}| - |\mu_{m^\prime,n^\prime}|}\bigg{|} + \epsilon \le {|\mu_{m,n} - \mu_{m^\prime,n^\prime}}| + \epsilon,
\end{split}
\end{eqnarray}
which proves Theorem \ref{freq_red_justify}.
\end{proof}
\paragraph{Remark} One can directly verify Theorem \ref{freq_red_justify} using the following elementary facts
\begin{eqnarray}
\begin{split}
\lim_{\theta_0\to0}\mathcal{C}_D &= {(\cos2\pi\lambda + D/2 - 1)}/{D},\\
\lim_{\theta_0\to0}\mathcal{S}_D &= {\sin2\pi\lambda}/{D}.
\end{split}
\end{eqnarray}
Namely, choose a sufficiently large $D$ such that $4/D < \epsilon/2$, and then choose $\theta_0$ sufficiently small. But such argument lacks a quantitative understanding of the limiting behavior and the relation among the variables in question. We adopt an alternative proof above using Lemma \ref{lower_lemma} to explicitly quantify the smallness of $\theta_0$ in terms of $\lambda$ and $\epsilon$.

\section{Proof of Lemma \ref{upper_lemma}}\label{rope_proof_upperlemma}

\begin{proof}[Proof of Lemma \ref{upper_lemma}]
Without ambiguity and for simplicity, drop $\lambda$ from the expressions of $\mathcal{C}_D(\lambda), \mathcal{S}_D(\lambda)$ throughout the proof. First let us focus on $\mathcal{C}_D$. By treating $\mathcal{C}_D$ as a Riemann sum we can rewrite it as follows:
\begin{eqnarray}
\begin{split}
\label{decomp_c}
\mathcal{C}_D &= \bigg{(} \frac{1}{D}\sum_{d=0}^{D/2}\cos 2\pi\lambda\theta_0^{2d/D} - \frac{1}{2}\int_{0}^1 \cos 2\pi\lambda \theta_0^xdx\bigg{)} + \frac{1}{2}\int_{0}^1 \cos 2\pi\lambda \theta_0^xdx\\
&= \frac{1}{2}\sum_{d=0}^{D/2} \int_{2d/D}^{2(d+1)/D }\Big{(} \cos2\pi\lambda \theta_0^{2d/D} - \cos2\pi\lambda \theta_0^x\Big{)}dx + \frac{1}{2}\int_{0}^1 \cos 2\pi\lambda \theta_0^xdx\\
&\eqqcolon \frac{1}{2}\Delta + \frac{1}{2} \mathcal{I}.
\end{split}
\end{eqnarray}
We first consider $\Delta\eqqcolon \sum_{d=0}^{D/2}\Delta_d$. For arbitrary $\alpha\in (0, 1)$ we may split the sum into two parts:
\begin{eqnarray}
\begin{split}
\label{two_sums}
\Delta = \sum_{d\le\alpha D/2} \Delta_d + \sum_{d>\alpha D/2}  \Delta_d \eqqcolon \Delta^\prime + \Delta^{\prime\prime}.
\end{split}
\end{eqnarray}
For each term in $\Delta^{\prime}$, we use the fact that cosine functions are bounded by $1$ and control it as follows:
\begin{eqnarray}
\begin{split}
\label{head_sum}
|\Delta_d| \le \frac{2}{D}\cdot 2 = \frac{4}{D}.
\end{split}
\end{eqnarray}
For each term in $\Delta^{\prime\prime}$, we instead use the Lipschitz bound of the integrand:
\begin{eqnarray}
\begin{split}
\label{tail_sum}
|\Delta_d| \le \text{Lip}_d\cdot \frac{4}{D^2},
\end{split}
\end{eqnarray}
where
\begin{eqnarray}
\begin{split}
\label{lip_bound}
\text{Lip}_d \eqqcolon \sup_{[\frac{2d}{D}, \frac{2(d+1)}{D}]}|\frac{d}{dx}\cos 2\pi\lambda \theta_0^x| = 2\pi\lambda |\log \theta_0\cdot \theta_0^x\sin2\pi\lambda\theta_0^x| \le 2\pi\lambda |\log\theta_0|\theta_0^{2d/D}.
\end{split}
\end{eqnarray}
Now plugging both \eqref{head_sum} and \eqref{tail_sum} into \eqref{two_sums}, we obtain:
\begin{eqnarray}
\begin{split}
\label{two_parts_est}
|\Delta^\prime| &\le \frac{D\alpha}{2}\cdot \frac{4}{D} = 2\alpha,\\
|\Delta^{\prime\prime}| &\le \frac{4}{D^2}\cdot 2\pi\lambda |\log\theta_0|\cdot \sum_{d=D\alpha/2 + 1}^{D/2}\theta_0^{2d/D}\\
&\le \frac{4}{D^2}\cdot 2\pi\lambda |\log\theta_0| \cdot \theta_0^{\frac{2}{D}(\frac{D\alpha}{2} + 1)} \cdot \frac{1}{1 - \theta_0^{2/D}}\\
& \le \frac{8\pi\lambda|\log\theta_0|}{D^2}\cdot\theta_0^\alpha \cdot (\frac{1}{1 - \theta_0^{2/D}}  - 1).
\end{split}
\end{eqnarray}
Using 2nd order Taylor's expansion with remainder of Lagrange form, we obtain that
\begin{eqnarray}
\begin{split}
1 - \theta_0^{2/D} \ge |\log\theta_0|\cdot\frac{2}{D} - |\log\theta_0|^2\cdot\frac{4}{D^2}.
\end{split}
\end{eqnarray}
Here we have used the following fact:
\begin{eqnarray}
\begin{split}
\label{summable_est}
\sup_{x\in[0,\frac{2}{D}]}|\frac{d^2}{dx^2}\theta_0^x| \le |\log\theta_0|^2.
\end{split}
\end{eqnarray}
Now inserting \eqref{summable_est} into the estimate of $\Delta^{\prime\prime}$ in \eqref{two_parts_est}, we get
\begin{eqnarray}
\begin{split}
\label{tail_sum_est}
|\Delta^{\prime\prime}|\le \frac{4\pi\lambda}{D}\cdot\theta_0^\alpha\cdot \frac{1}{1 - 2|\log\theta_0| / D} \le \frac{4\pi\lambda}{D}\cdot\theta_0^\alpha \cdot \frac{1}{1 - 1/2} = \frac{8\pi\lambda\theta_0^\alpha}{D}.
\end{split}
\end{eqnarray}
We have now arrived at the bound for the first term in \eqref{two_sums}:
\begin{eqnarray}
\begin{split}
\label{delta_est}
|\frac{\Delta}{2}|\le \alpha + \frac{4\pi\lambda\theta_0^\alpha}{D} \eqqcolon \epsilon(D; \lambda, \theta_0, \alpha),
\end{split}
\end{eqnarray}
where $\epsilon(D; \lambda, \theta_0, \alpha)$ is defined in \eqref{eps_func} and $\alpha\in (0, 1)$ is arbitrary. Next, we estimate the integral term in \eqref{decomp_c}.
By performing a change of variable $y=\theta_0^x$ we see that
\begin{eqnarray}
\begin{split}
\label{osc}
\frac{1}{2}\cdot \mathcal{I} = \frac{1}{2|\log\theta_0|}\int_{\theta_0}^{1} \frac{\cos2\pi \lambda y}{y}dy = \frac{1}{2|\log\theta_0|}\int_{\lambda\theta_0}^{\lambda} \frac{\cos2\pi  y}{y}dy.
\end{split}
\end{eqnarray}
Note the right hand side of \eqref{osc} is an oscillatory integral, so we may employ the cancellation effect to control it. Define
\begin{eqnarray}
\begin{split}
\label{decomp}
n_0 &= \min\{n\in\mathbb{Z}: \lambda\theta_0 \le \frac{1}{4}(4n + 1)\},\\
N_0 &= \max\{n\in\mathbb{Z}: \lambda \ge \frac{1}{4}(4n + 5)\}.
\end{split}
\end{eqnarray}
Decompose the integration interval of \eqref{osc} as follows (for brevity we omit the integrand):
\begin{eqnarray}
\begin{split}
\label{decomp_int}
&\int_{\lambda\theta_0}^{\frac{1}{4}(4n_0 + 1)} + \sum_{n=n_0}^{N_0} \big{(}\int_{\frac{1}{4}(4n + 1)}^{\frac{1}{4}(4n + 3)} +   \int_{\frac{1}{4}(4n + 3)}^{\frac{1}{4}(4n + 5)} \big{)} + \int_{\frac{1}{4}(4N_0 + 5)}^\lambda\\
=& I_*+ \sum_{n=n_0}^{N_0} (I_n^- + I_n^+) + I^*.
\end{split}
\end{eqnarray}
According to \eqref{decomp}, integrals $I_*$ and $I^*$ contain at most a full period of $\cos2\pi y$, and therefore can be trivially bounded:
\begin{eqnarray}
\begin{split}
\label{tail}
|I^*| + |I_*| \le \frac{1}{\lambda \theta_0} + \frac{1}{\lambda - 1} \le \frac{2}{\lambda \theta_0}.
\end{split}
\end{eqnarray}
Here we used the assumption that $\theta_0 < 1/10$ and $\lambda > 1$ in the second inequality above. 
For the sum term in the middle, we have 
\begin{eqnarray}
\begin{split}
|I_n^- + I_n^+| = |I_n^-| - |I_n^+| \le \frac{4}{4n+1} \cdot \frac{1}{\pi} - \frac{4}{4n+5} \cdot \frac{1}{\pi} \le \frac{1}{\pi}\cdot \frac{1}{n^2}.
\end{split}
\end{eqnarray}
Here we used the fact that $\cos2\pi y$ is constantly non-positive on the integral range of $I_n^-$, and therefore
\begin{eqnarray}
\begin{split}
|I_n^-| = -\int_{\frac{4n+1}{4}}^{\frac{4n+3}{4}} \frac{\cos2\pi y}{y}dy \le -\frac{4}{4n+1}\int_{\frac{4n+1}{4}}^{\frac{4n+3}{4}} \cos2\pi ydy = \frac{4}{4n+1} \cdot \frac{1}{\pi}.
\end{split}
\end{eqnarray}
Similarly, $\cos2\pi y$ is constantly non-negative on the integral range of $I_n^+$, and thus
\begin{eqnarray}
\begin{split}
-|I_n^+| = -\int_{\frac{4n+3}{4}}^{\frac{4n+5}{4}} \frac{\cos2\pi y}{y}dy \le -\frac{4}{4n+5}\int_{\frac{4n+3}{4}}^{\frac{4n+5}{4}} \cos2\pi ydy = \frac{4}{4n+5} \cdot \frac{1}{\pi}.
\end{split}
\end{eqnarray}

so the sum admits the following bound:
\begin{eqnarray}
\begin{split}
\label{sum_est}
\sum_{n=n_0}^{N_0} (I_n^- + I_n^+)\le \frac{1}{\pi}\sum_{n=n_0}^{N_0}\frac{1}{n^2} \le \frac{1}{\pi}\cdot\frac{1}{n_0 - 1}  \le \frac{1}{\pi}\cdot\frac{2}{\lambda\theta_0},
\end{split}
\end{eqnarray}
where we used the first identity in \eqref{decomp}. Inserting \eqref{tail} and \eqref{sum_est} into \eqref{osc}, we have
\begin{eqnarray}
\begin{split}
\label{osc_est}
\frac{1}{2}\cdot\mathcal{I} \le \frac{2}{\theta_0|\log\theta_0|\lambda\pi}.
\end{split}
\end{eqnarray}
Combining \eqref{delta_est} and \eqref{osc_est}, we concluded the estimate of $\mathcal{C}_D$ in \eqref{c_s_est}. Next we estimate $\mathcal{S}_D$. We point out that most the proofs of bounding $\mathcal{S}_D$ follows the same line as that of $\mathcal{C}_D$, so to avoid repetitive argument, therefore we state without proving all results that are achievable through same techniques as its $\mathcal{C}_D$ counterpart, and only focus on addressing the difference.

First, we conduct a similar decomposition of $\mathcal{S}_D$ as \eqref{decomp_c}, into $\frac{\Delta}{2} + \frac{\mathcal{I}}{2}$. The estimate of $\Delta$ follows from exactly the same lines as that of $\mathcal{C}_D$, hence we omit the details:
\begin{eqnarray}
\begin{split}
\label{delta_s_est}
\frac{\Delta}{2} \le \epsilon(D; \lambda, \theta_0, \alpha).
\end{split}
\end{eqnarray}
To estimate $\frac{\mathcal{I}}{2}$, we again use the change of variable $y=\theta_0^x$ and similar to \eqref{osc} we get
\begin{eqnarray}
\begin{split}
\label{osc_s}
\frac{\mathcal{I}}{2} = \frac{1}{2|\log\theta_0|}\int_{\lambda\theta_0}^{\lambda} \frac{\sin2\pi  y}{y}dy.
\end{split}
\end{eqnarray}
To bound this oscillatory integral we adopt the following decomposition of integral region:
\begin{eqnarray}
\begin{split}
\label{decomp_int_s}
&\int_{\lambda\theta_0}^{1/2} + \sum_{n=0}^{N_0} \big{(}\int_{\frac{1}{2}(2n + 1)}^{\frac{1}{2}(2n + 2)} +   \int_{\frac{1}{2}(2n + 2)}^{\frac{1}{2}(2n + 3)} \big{)} + \int_{\frac{1}{2}(2N_0 + 3)}^\lambda\\
=& I_*+ \sum_{n=0}^{N_0} (I_n^- + I_n^+) + I^*,
\end{split}
\end{eqnarray}
where
\begin{eqnarray}
\begin{split}
\label{decomp_s}
N_0 = \max\{n\in\mathbb{Z}: \lambda \ge \frac{1}{2}(2n + 3)\}.
\end{split}
\end{eqnarray}
Note again that the integrand is non-positive in $I_n^-$, and non-negative in $I_n^+$. Similar to \eqref{tail} we have 
\begin{eqnarray}
\begin{split}
|I_n^- + I_n^+|\le \frac{1}{\pi}\cdot |-\frac{2}{2n+1} + \frac{2}{2n+3}| = \frac{1}{\pi}\cdot\frac{4}{(2n+1)(2n+3)} < \frac{1}{\pi}\cdot\frac{1}{n^2},
\end{split}
\end{eqnarray}
and therefore 
\begin{eqnarray}
\begin{split}
\label{mid_s}
\sum_{n=0}^{N_0} (I_n^- + I_n^+) < \frac{1}{\pi}\sum_{n=0}^{N_0} \frac{1}{n^2} < \frac{\pi}{6} < 1.
\end{split}
\end{eqnarray}
Next, we use the fact that ${\sin2\pi y}/{y}$ is bounded by $1$ on the interval $[0, 1/2]$ to trivially bound $I_*$:
\begin{eqnarray}
\begin{split}
\label{head_s}
|I_*|\le \frac{1}{2}.
\end{split}
\end{eqnarray}
The integral in the last term contains at most a full period, and thus admits the following bound:
\begin{eqnarray}
\begin{split}
\label{tail_s}
|I^*|<\frac{1}{\lambda - 1} \le 1.
\end{split}
\end{eqnarray}
Combining \eqref{mid_s}, \eqref{head_s}, and \eqref{tail_s}, we have
\begin{eqnarray}
\begin{split}
\label{int_s_est}
\frac{\mathcal{I}}{2} < \frac{2}{|\log\theta_0|}.
\end{split}
\end{eqnarray}
Lastly, combining \eqref{delta_s_est} and \eqref{int_s_est}, we proved the estimates of $\mathcal{S}_D$ in \eqref{c_s_est}. Thus we concluded Lemma \ref{upper_lemma}.
\end{proof}

\section{Proof of Lemma \ref{lower_lemma}}\label{rope_proof_lowerlemma}
\begin{proof}[Proof of Lemma \ref{lower_lemma}]
    The proof reuses the decomposition \eqref{decomp_c} and the bound \eqref{delta_est}, but further needs a lower bound for ${\mathcal{I}}/{2}$. 
    First we decompose the right hand side integral of \eqref{osc} as follows:
\begin{eqnarray}
\begin{split}
\frac{1}{2|\log\theta_0|}\bigg{(}\int_{\lambda\theta_0}^{1/4} + \int_{1/4}^{\lambda}\bigg{)}\frac{\cos2\pi y}{y} dy.
\end{split}
\end{eqnarray}
Following the same argument to bound the middle term in \eqref{mid_s}, we have
\begin{eqnarray}
\begin{split}
\label{s_I_lower}
|\frac{1}{2|\log\theta_0|}\int_{1/4}^{\lambda} \frac{\cos2\pi y}{y} dy| < \frac{1}{2|\log\theta_0|} \cdot 2 = \frac{1}{|\log\theta_0|}.
\end{split}
\end{eqnarray}

Next, notice that the integrand stays positive on $[\lambda \theta_0, 1/4]$, we hence have
\begin{eqnarray}
\begin{split}
\label{s_I_upper}
&\frac{1}{2|\log\theta_0|}\int_{\lambda\theta_0}^{1/4} \frac{\cos2\pi y}{y} dy > \frac{1}{2|\log\theta_0|}\int_{\lambda\theta_0}^{\lambda\theta_0^{\epsilon_0}} \frac{\cos2\pi y}{y} dy > \frac{\cos2\pi\lambda \theta_0^{\epsilon_0}}{2|\log\theta_0|}\cdot\int_{\lambda\theta_0}^{\lambda\theta_0^{\epsilon_0}}\frac{dy}{y} \\
=&  \frac{\cos2\pi\lambda \theta_0^{\epsilon_0}}{2|\log\theta_0|} \cdot |\log\theta_0|(1 - \epsilon_0) = \frac{1}{2}\cdot (1 - \epsilon_0)\cdot \cos2\pi\lambda\theta_0^{\epsilon_0}.
\end{split}
\end{eqnarray}
Finally, combining \eqref{delta_est}, \eqref{s_I_lower}, and \eqref{s_I_upper}, we obtain 
\begin{eqnarray}
\begin{split}
|\mathcal{C}_D|&\ge \frac{1}{2}\mathcal{I} - \frac{1}{2}|\Delta| \ge \frac{1}{2|\log\theta_0|}\int_{\lambda\theta_0}^{1/4} \frac{\cos2\pi y}{y} dy - |\frac{1}{2\log\theta_0}\int_{1/4}^{\lambda} \frac{\cos2\pi y}{y} dy| - \frac{1}{2}|\Delta|\\
&\ge (1 - \epsilon_0)\cdot \cos2\pi\lambda\theta_0^{\epsilon_0} - \frac{1}{|\log\theta_0|} - \epsilon(D; \lambda, \theta_0, \alpha),
\end{split}
\end{eqnarray}
which is exactly \eqref{c_lower_est}, and hence proved Lemma \ref{lower_lemma}.
\end{proof}

\section{Proof of Theorem \ref{vanishing_bias_thm}}\label{vanishing_bias_thm_proof}
\begin{proof}[Proof of Theorem \ref{vanishing_bias_thm}]
    For convenience, we introduce the following notations:
\begin{eqnarray}
\begin{split}
x_A \coloneqq (q_A, k_A),\quad x_P \coloneqq (q_P, k_P).
\end{split}
\end{eqnarray}
First let us expand the expression of the expectation of TAPA:
\begin{eqnarray}
\begin{split}
\label{two_ints}
\int_{x_A}\frac{x_A^\top\cdot J_{\theta D} \cdot x_A}{\sqrt{\theta D}}\bigg{(}\int_{x_P} \cos\Big{(} \frac{2\pi|m - n|^\alpha}{\sqrt{(1-\theta)D}}\cdot x_P^\top\cdot J_{(1-\theta)D} \cdot x_P\Big{)}\cdot\rho(x_A, x_P)dx_P\bigg{)} dx_A
\end{split}
\end{eqnarray}
where $J_{d} = \begin{pmatrix}
\mathbf{0} & I_{d} \\
I_{d} & \mathbf{0} 
\end{pmatrix}$. Let us further simplify the expression by writing $\lambda\eqqcolon \frac{2|m - n|^\alpha}{\sqrt{(1-\theta)D}}$. The inner integral of \eqref{two_ints} can now written as
\begin{eqnarray}
\begin{split}
\label{exp_int_control}
\int_{x_P} \cos\Big{(} \pi \lambda\cdot x_P^\top\cdot J \cdot x_P\Big{)}\cdot\rho(x_A, x_P)dx_P = \text{Re}\bigg{(}\int_{x_P} e^{-i\pi \lambda\cdot x_P^\top\cdot J \cdot x_P}\cdot\rho(x_A, x_P)dx_P\bigg{)}.
\end{split}
\end{eqnarray}
where we omitted the subscript $(1-\theta)D$ of $J$ when there is no ambiguity present. Applying Fourier Transform to imaginary Gaussian (e.g. Proposition 6.2 in \citep{wolff2003lectures}), we manage to bound the integral on the right hand side of \eqref{exp_int_control} as follows: 
\begin{eqnarray}
\begin{split}
\label{inner_control}
&|\int_{x_P} e^{-i\pi \lambda\cdot x_P^\top\cdot J \cdot x_P}\cdot\rho(x_A, x_P)dx_P| \\
\le& C\lambda^{-(1-\theta)D}\bigg{(}\sup_{x_P}|\rho(x_A,x_P)| + \lambda^{-2}\sum_{|\alpha_P|= 2 }\sup_{x_P}|D^{\alpha_P}\rho(x_A,x_P)|\bigg{)},
\end{split}
\end{eqnarray}
where the summation is taken over all second order derivatives with respect to the $x_P$ variable, and $C$ is a universal constant. Since $\rho$ is in Schwartz class, the following seminorms of $\rho$ admit fast decay in $x_A$:
\begin{eqnarray}
\begin{split}
\sup_{x_P}|\rho(x_A,x_P)|,\quad \sup_{x_P}|D^{\alpha_P}\rho(x_A,x_P)|.
\end{split}
\end{eqnarray}
Therefore the following function is integrable in $x_A$:
\begin{eqnarray}
\begin{split}
\Phi_\lambda(x_A)\eqqcolon \frac{x_A^\top\cdot J_{\theta D} \cdot x_A}{\sqrt{\theta D}}\cdot \bigg{(}\sup_{x_P}|\rho(x_A,x_P)| + \lambda^{-2}\sum_{|\alpha_P|= 2 }\sup_{x_P}|D^{\alpha_P}\rho(x_A,x_P)|\bigg{)}.
\end{split}
\end{eqnarray}
Note when $|m-n|\neq 0$, by definition we have $\lambda\ge C(D) > 0$. Therefore $\{\Phi_\lambda\}_\lambda$ is uniformly bounded in $L^1$:
\begin{eqnarray}
\begin{split}
\label{expectation_control}
|\int_{x_A}\Phi_\lambda(x_A) dx_A| \le C^\prime(\rho, D).
\end{split}
\end{eqnarray}
Combining \eqref{two_ints}, \eqref{inner_control}, and \eqref{expectation_control}, we have shown that
\begin{eqnarray}
\begin{split}
\label{expectation_control_final}
\Big{|}\mathbb{E}_{q,k}\mathrm{Attn}_{\mathrm{TAPA}}(q^{(m)}, k^{(n)})\Big{|}\le C^\prime(\rho, D)\lambda^{-(1-\theta)D} = C(\rho, D)|m-n|^{-\alpha(1-\theta)D}.
\end{split}
\end{eqnarray}
Thus proves Theorem \ref{vanishing_bias_thm}.
\end{proof}

\section{Proof of Theorem \ref{lower_var_thm}}\label{lower_var_thm_proof}

\begin{proof}
By exploiting the elementary identity $\cos^2x = {(1 + \cos2x)}/{2}$, we may expand the second moment of Attention as follows:
\begin{eqnarray}
\begin{split}
&\mathbb{E}\Big{|}\mathrm{Attn}_{\mathrm{TAPA}} \Big{|}^2 \\
=& \frac{1}{2}\int_{x_A}\frac{(x_A^\top\cdot J_{\theta D} \cdot x_A)^2}{\theta D} dx_A \\
&+ \frac{1}{2} \int_{x_A}\frac{x_A^\top\cdot J_{\theta D} \cdot x_A}{\sqrt{\theta D}}\bigg{(}\int_{x_P} \cos\Big{(} \frac{4\pi|m - n|^\alpha}{\sqrt{(1-\theta)D}}\cdot x_P^\top\cdot J_{(1-\theta)D} \cdot x_P\Big{)}\cdot\rho(x_A, x_P)dx_P\bigg{)} dx_A\\
=& I + II.
\end{split}
\end{eqnarray}
Invoking the assumption, we have
\begin{eqnarray}
\begin{split}
\label{term_i}
I = \frac{1}{2\theta D}\mathbb{E}_{q_A,k_A} |q_A^\top k_A|^2 \ge \frac{\sigma_0^2}{2}
\end{split}
\end{eqnarray}
Next, we follow an exactly identical argument of estimating $\mathbb{E}\mathrm{Attn}_{\theta,\alpha}$ (see Appendix \ref{vanishing_bias_thm_proof}) to obtain
\begin{eqnarray}
\begin{split}
\label{term_ii}
|II| \le C^{\prime\prime}(D,\rho)|m-n|^{-\alpha(1-\theta)D}.
\end{split}
\end{eqnarray}
Lastly, combining \eqref{expectation_control_final}, \eqref{term_i}, \eqref{term_ii}, and taking $\liminf$ as $|m-n|\to\infty$, we proved \eqref{liminf_lower}, which concludes Theorem \ref{lower_var_thm}.
\end{proof}

\section{Validation of the boundedness conditions of Theorem \ref{pre_main_thm} and Theorem \ref{main_thm}}
\label{lower_bd_assumption_valid}

\begin{figure}[h]
\centering
\includegraphics[width=\textwidth, height=5.5cm]{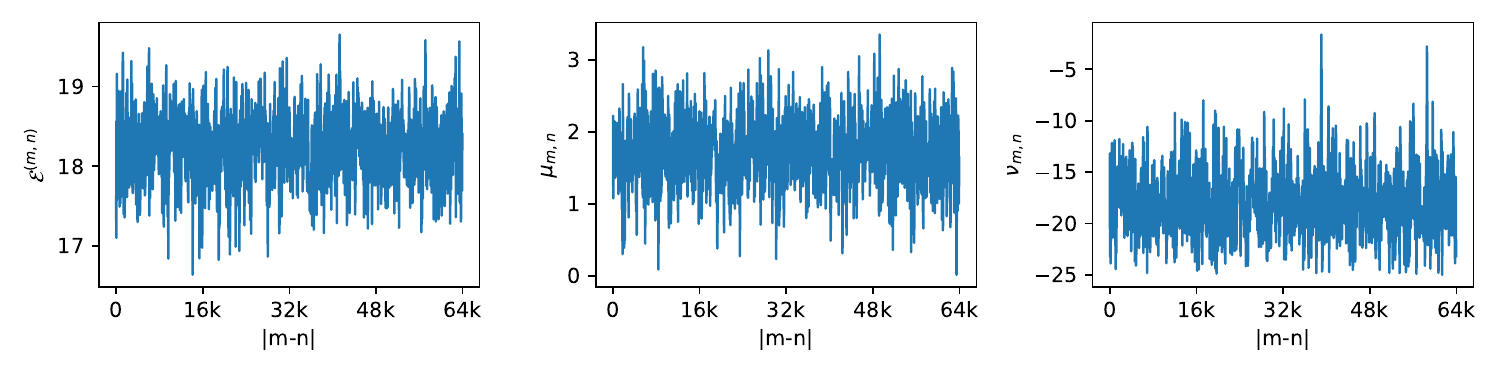}
\caption{Empirical validation of the boundedness conditions in Theorem \ref{pre_main_thm} and Theorem \ref{main_thm}.
    \textbf{Left:} values of $\mu_{m,n}$.
    \textbf{Right:} values of $\nu_{m,n}$.
    All quantities are computed from layer 15 of a 7B RoPE model and remain uniformly bounded across distances up to 64k.}
\label{mu_nu}
\end{figure}

From the proof of Theorem \ref{pre_main_thm} (see Appendix \ref{pre_main_thm_proof}), we see that the condition of Theorem \ref{pre_main_thm} only matters for all sufficiently large positions, so it suffices to verify the conditions of Theorem \ref{main_thm}.

Sample 1000 pairs $(m_i,n_i)$ such that their relative distance $|m_i - n_i|$ is evenly distributed over the interval $[2, 65536]$. For each pair, we compute $\mu_{m,n}, \nu_{m,n}$ values using representations extracted from layer 15 of the 7B model with RoPE with $\theta = 5 \times 10^{-5}$. The left and right panels of Fig.~\ref{mu_nu} show the values of $\mu_{m,n}$ and $\nu_{m,n}$ as functions of $|m-n|$.

Across the full range of relative distances, $\mu_{m,n}$ remains strictly bounded away from zero, with a minimum observed value of approximately $1.07$. In addition, both $|\mu_{m,n}|$ and $|\nu_{m,n}|$ remain uniformly bounded above, with maxima around $3.35$ and $25$ respectively. These empirical observations confirm the boundedness conditions assumed in Theorem \ref{main_thm}.

We remark that this validation is empirical and intended to verify that the required conditions hold for practical model instantiations.

\section{Visualize distance bias}\label{no_attn_bias}
\begin{figure}[h]
\centering
\includegraphics[width=\textwidth, height=5cm]{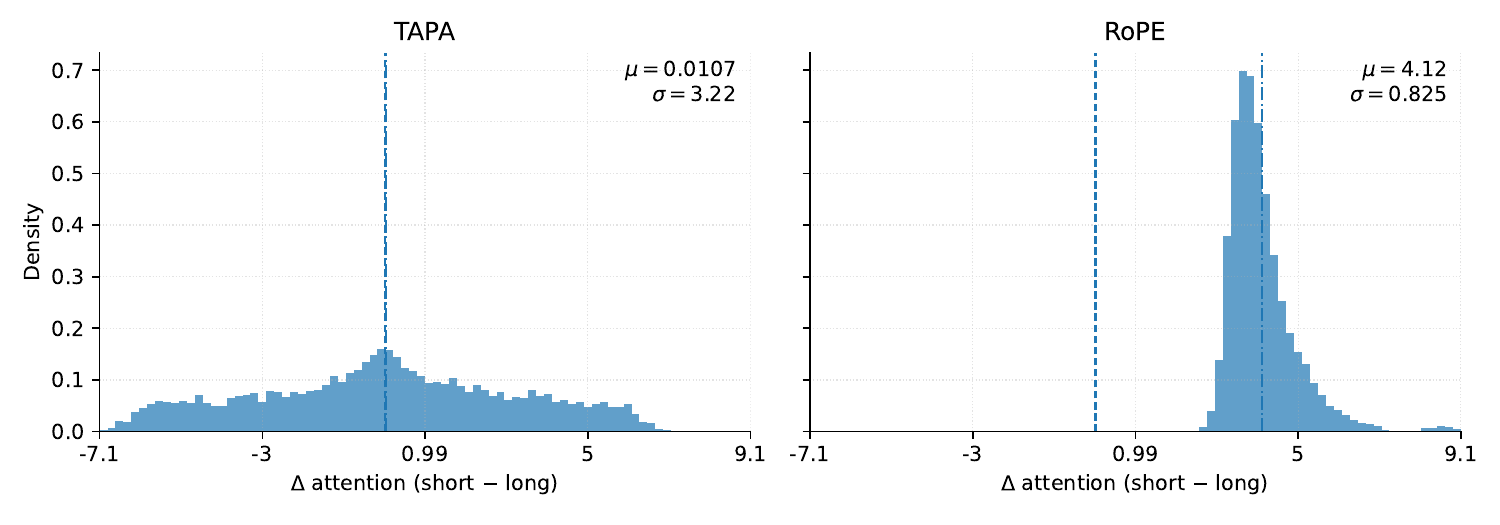}
\caption{\small Empirical distributions of attention score differences, computed over $10000$ randomly sampled pairs with positions drawn from $[0,100]$ (short range) and $[10000,10100]$ (long range). The skewed distribution of RoPE reflects its strong distance bias, whereas the near-symmetric distribution of TAPA indicates that no significant bias is present.}
\label{pos_bias}
\end{figure}
To visualize the distance bias of RoPE and TAPA, we compare the distributions of their attention scores difference between short-range and long-range token pairs. More precisely, given two disjoint intervals $I_{\text{short}}=[0,100]$ and $I_{\text{long}}=[10000,10100]$, we randomly sample $\lambda\in I_{\text{short}}$ and $\Lambda\in I_{\text{long}}$, and compute the difference 
\[
\Delta \;=\; \mathrm{Attn}_{\lambda}-\mathrm{Attn}_{\Lambda}.
\]
We randomly sample $10000$ pairs of attention scores satisfying the above positional constraint from the evaluation results on PG19 test set. The resulting histograms (Figure \ref{pos_bias}) show that TAPA produces a highly symmetric distribution centered near zero ($\mu \approx 0.03, \sigma\approx 3.3$), which indicate low distance bias between near and far token-pairs, while RoPE yields a distribution shifted significantly toward positive values ($\mu \approx 4.1,\ \sigma \approx 0.85$), revealing a systematic bias that favors short-range interactions.

\section{TAPA's phase ablations}
\label{app:tapa_phase_ablation}
We compare two phase functions for TAPA: (i) \emph{quadratic} (stationary) phase in \eqref{TAP_sta} and (ii) \emph{linear} (non-stationary) phase:
\begin{eqnarray}
\begin{split}
\label{TAP_lin}
\phi(q, k) = \frac{1}{\sqrt{(1-\theta)D}}\cdot(q^\top, k^\top)\cdot\mathds{1}_{(1-\theta)D}.
\end{split}
\end{eqnarray}

According to Table \ref{key_results_abla}, TAPA with quadratic phase consistently outperforms the linear variant across all lengths. In the short–mid range (1k–16k), quadratic improves from \(13.04\!\to\!11.83\) versus \(14.63\!\to\!13.29\) for linear—an absolute gap of \(1.3\!-\!1.6\) (\(\approx\!11\%\) relative at 1k and 16k). At longer lengths the divergence grows and stability differs markedly: at 32k, linear reaches \(13.13\) while quadratic is \(11.74\) (\(\Delta=1.39\), \(\approx\!11\%\)); beyond 32k the linear curve becomes non-monotonic and degrades, whereas quadratic remains flat and low. These results align with the intuitions from the theoretical perspective of oscillatory integrals, where non-stationary phases (e.g., linear) induce large, rapidly varying oscillations in representation space, while stationary phases are less sensitive to small representation changes. 

{However, it is worth noting that although TAPA with linear phase is suboptimal compared to the quadratic phase, it still dominates the baselines in Table~\ref{key_results} at long ranges, achieving a significantly lower orders of magnitude: e.g., at 49k/64k it attains \(13.59/14.82\) perplexity versus \(322\!-\!943\) and \(1963\!-\!2285\) for YaRN and RoPE/PI, respectively.}

Overall, the stationary (quadratic) phase yields both {better accuracy} and {greater long-context stability}, while even the linear-phase TAPA retains strong long-context robustness compared to RoPE family.

\section{Extended Related Work}
\label{app:related_work}

\subsection{Positional Encoding in Transformers}
Early Transformers use fixed sinusoidal absolute positional embeddings~\citep{Vaswani2017AttentionIA}, with later work exploring learned absolute embeddings in large-scale language models~\citep{Radford2018ImprovingLU,Radford2019LanguageMA,Devlin2019BERTPO}.  
Absolute methods inject positional signals independent of token content and are typically constrained to a fixed context length.

Relative position encodings address this limitation by allowing attention to depend on pairwise token distances, often implemented as additive biases or learned relative embeddings~\citep{Shaw2018SelfAttentionWR,Dai2019TransformerXLAL,Raffel2019ExploringTL,He2020DeBERTaDB,Press2021TrainST}.  
CAPE~\citep{Likhomanenko2021CAPEER} augments sinusoidal embeddings with randomized continuous shifts and scaling during training.

\subsection{RoPE and Extensions for Long-Context Extrapolation}
Rotary positional embedding (RoPE)~\citep{Su2021RoFormerET} encodes relative positions by applying rotations to query/key representations.  
However, naïve extrapolation of RoPE beyond pretraining context length often leads to degraded perplexity and unstable attention behavior, motivating numerous extensions.

XPos~\citep{Sun2022ALT} introduces an exponential scaling factor on top of RoPE to impose stronger distance-dependent bias.  
Subsequent work observes that XPos still requires additional adjustment, such as base-frequency changes, to maintain performance when extending context length~\citep{Xiong2023EffectiveLS}.  
Related studies systematically analyze the scaling behavior induced by different base frequencies~\citep{Liu2023ScalingLO}.

Position interpolation (PI)~\citep{Chen2023ExtendingCW} rescales positions to map longer sequences into the original pretraining range, avoiding direct extrapolation to unseen positions.  
YaRN~\citep{peng2023yarnefficientcontextwindow} and LongRoPE~\citep{Ding2024LongRoPEEL} further propose non-uniform interpolation schedules that modulate scaling strength across RoPE dimensions, aiming to preserve local resolution while enabling longer-range generalization.

Many of these methods rely on hand-designed rescaling rules or post-hoc hyperparameter adjustments, and appropriate settings can be sensitive to model size, training regime, and target context length.  
Some theoretical analyses exist, such as bounding RoPE attention scores~\citep{Su2021RoFormerET} and studying interpolation error for PI~\citep{Chen2023ExtendingCW}, but they do not directly characterize the failure mode of RoPE extrapolation nor derive principled extrapolation schemes.

\subsection{Non-RoPE Approaches to Extrapolation}
Several approaches aim to extrapolate without relying on RoPE-style modifications.  
NoPE removes explicit positional encodings and relies on the asymmetry of the causal mask to encode positional information implicitly~\citep{Haviv2022TransformerLM,Chi2023LatentPI,Kazemnejad2023TheIO}, though limitations for long-context scaling have been shown both empirically and theoretically~\citep{Wang2024LengthGO,Ma2024MesaExtrapolationAW}.  
LM-Infinite introduces a masking strategy for extrapolation compatible with relative position encodings~\citep{Han2023LMInfiniteZE}.  

Other lines of work modify the attention architecture itself (e.g., Fourier mixing, recurrence, long-convolution, or hybrid designs), including FNet~\citep{Lee2021FNet}, LongNet~\citep{Ding2023LongNet}, RWKV~\citep{Peng2023RWKV}, and Hyena~\citep{Poli2023Hyena}.  
Since these methods deviate from standard causal attention, they are complementary to our focus on improving RoPE-style positional encoding under the vanilla Transformer architecture.

\begin{table}[h]
\centering
\small
\begin{tabular}{p{0.22\linewidth} p{0.34\linewidth} p{0.36\linewidth}}
\toprule
Asset & Use & License / Terms \\
\midrule
The Pile~\citep{Gao2020ThePA}
& 8K pretraining corpus
& The EleutherAI Pile code repository is MIT licensed; the Pile is composed of multiple component datasets that may have their own licenses or terms. We cite the dataset and do not redistribute it. \\

PG19~\citep{Rae2019CompressiveTF}
& Long-context continual-pretraining and perplexity evaluation
& The DeepMind PG19 benchmark repository is Apache License 2.0; the benchmark consists of Project Gutenberg books published before 1919. \\

BookCorpusOpen~\citep{Zhu2015AligningBA}
& Part of the Needle-in-a-Haystack continual-pretraining mixture
& No explicit standard license is specified for the BookCorpusOpen dataset card we used. We use it only for training/evaluation and do not redistribute it. \\

Wikipedia~\citep{Devlin2019BERTPO}
& Part of the Needle-in-a-Haystack continual-pretraining mixture
& Wikipedia text is available under Creative Commons Attribution--ShareAlike terms, with some text also available under the GNU Free Documentation License. \\

C4~\citep{Raffel2019ExploringTL}
& Part of the Needle-in-a-Haystack continual-pretraining mixture
& The AllenAI/Hugging Face C4 release is under ODC-BY 1.0; use is also subject to the Common Crawl terms of use for the underlying web content. \\

RULER / Needle-in-a-Haystack protocol~\citep{Hsieh2024RULERWT}
& Long-context retrieval evaluation protocol
& The NVIDIA RULER code repository is Apache License 2.0. We follow the evaluation protocol and generate synthetic needle examples. \\

Paul Graham essays
& Haystack source text for Needle-in-a-Haystack evaluation
& Publicly accessible web essays; no explicit open license was identified. We use them only as evaluation haystack text following the RULER-style setup and do not redistribute the essays. \\

FlashAttention~\citep{Dao2022FlashAttentionFA}
& Runtime comparison and fused-attention implementation reference
& BSD 3-Clause License. \\

Triton
& Custom fused TAPA kernel implementation
& MIT License. \\
\bottomrule
\end{tabular}
\caption{Existing assets used in our experiments and their licenses or terms.}
\label{tab:assets_licenses}
\end{table}


\end{document}